\setlist[itemize]{topsep=0pt}
\DeclareMathOperator*{\argmin}{arg\,min}
\DeclareMathOperator*{\argmax}{arg\,max}
\newtheorem{proposition}{Proposition}
\newtheorem{definition}{Definition}
\newenvironment{rem}{\par\small\zrem}{\endzrem}
\newcommand{\red}[1]{{\color{Red} #1}}
\newcommand{\green}[1]{{\color{Green} #1}}
\newcommand{\blue}[1]{{\color{Blue} #1}}
\newcommand{\white}[1]{{\color{Gray} #1}}
\newcommand{\yellow}[1]{{\color{YellowOrange} #1}}
\title{Beyond Bayes-optimality: meta-learning what you know you don't know}
\keywords{Bayesian agents, risk, ambiguity, meta-learning, AI safety}
\author[1*]{Jordi Grau-Moya}
\author[1*]{Gr\'egoire Del\'etang}
\author[1*]{Markus Kunesch}
\author[1]{Tim Genewein}
\author[1]{Elliot Catt}
\author[1]{Kevin Li}
\author[1]{Anian Ruoss}
\author[2]{Chris Cundy}
\author[1]{Joel Veness}
\author[1]{Jane Wang}
\author[1]{Marcus Hutter}
\author[1]{Christopher Summerfield}
\author[1]{Shane Legg}
\author[1]{Pedro Ortega}
\affil[*]{Equal contribution}
\affil[1]{DeepMind, London}
\affil[2]{Department of Computer Science,
Stanford University}
\begin{abstract}
Meta-training agents with memory has been shown to culminate in Bayes-optimal agents, which casts Bayes-optimality as the implicit solution to a numerical optimization problem rather than an explicit modeling assumption. Bayes-optimal agents are risk-neutral, since they solely attune to the expected return, and ambiguity-neutral, since they act in new situations as if the uncertainty were known. This is in contrast to risk-sensitive agents, which additionally exploit the higher-order moments of the return, and ambiguity-sensitive agents, which act differently when recognizing situations in which they lack knowledge. Humans are also known to be averse to ambiguity and sensitive to risk in ways that aren't Bayes-optimal, indicating that such sensitivity can confer advantages, especially in safety-critical situations. How can we extend the meta-learning protocol to generate risk- and ambiguity-sensitive agents?  
The goal of this work is to fill this gap in the literature  by showing that risk- and ambiguity-sensitivity also emerge as the result of an optimization problem using modified meta-training algorithms, which manipulate the experience-generation process of the learner.  We empirically test our proposed meta-training algorithms on agents exposed to foundational classes of decision-making experiments and demonstrate that they become sensitive to risk and ambiguity.
\end{abstract}
\begin{document}

\maketitle

\section{Introduction}

Reasoning about uncertainty is a hallmark of human intelligence. In artificial intelligence, handling uncertainty is critical for developing systems that act rationally and safely. Such systems should know when to be cautious and avoid catastrophic events, and when to take risks to reap greater benefits. 

But uncertainty comes in different flavors. The economist Frank H. Knight~\cite{Knight1921} proposed a subtle but important distinction between two fundamental types of uncertainty in decision-making, \emph{risk} and \emph{ambiguity}. Risk (i.e. known or closed-world uncertainty) applies to familiar situations where the exact outcome of an event is uncertain but probabilities can be computed, as in roulette wheels and dice. Knowing the probabilities over outcomes enables reducing randomness to effective certainty (e.g.\ via computing certainty equivalents). In contrast, ambiguity (i.e. unknown or open-world uncertainty) refers to the uncertainty in unfamiliar situations where the probabilities are not known or cannot be determined, e.g.\ whether in six years the US president will be a Democrat, or the answer to the question ``are Cydophines also Abordites?'' before attending a talk that might explain what these terms mean (the example discussed in~\cite{gilboa2009always}). 

In a seminal paper, Daniel Ellsberg argued that humans are acutely sensitive to risk and ambiguity~\cite{ellsberg1961risk}. This is likely because it could be advantageous to use different decision mechanisms in familiar but uncertain (risky) situations compared to truly novel (ambiguous) situations. Indeed, it has been found that risk and ambiguity are represented by distinct patterns of neural activity in the human brain~\cite{huettel2006neural, hsu2005neural}, and that patients with damage to a higher order brain structure called the orbitofrontal cortex are more risk- and ambiguity-neutral compared to control subjects~\cite{hsu2005neural}. This indicates that being sensitive to or aware of not only the risks, but also \textit{what is unknown} in a given situation can confer evolutionary advantages, and has even been observed in non-human primates~\cite{rosati2011chimpanzees}. This leads us to the questions: 1) Under what circumstances might being risk- or ambiguity-sensitive beneficial? and 2) How can we train agents to display such biases?

Prior work revealed that an agent equipped with memory \cite{wang2016learning,duan2016rl} can meta-learn a Bayes-optimal policy, i.e.\ a policy that (implicitly) reasons about uncertain hypotheses, optimally trading off exploration versus exploitation \cite{ortega2019meta,mikulik2020meta}. This casts Bayes-optimality as the implicit solution to a numerical optimization problem, rather than an explicit \emph{a priori} modeling requirement. While Bayes-optimal policies use uncertainty for guiding their behavior, they do so in a restricted form in its strict definition: first, they are \emph{risk-neutral}, i.e.\ insensitive to the shape of the distribution over returns except for the expected value; and second, they are also \emph{ambiguity-neutral}, that is, acting as if the uncertainty were known.

In this work we demonstrate how to adapt the meta-learning protocol to build agents sensitive to risk and ambiguity. The key proposal is a modification to the \emph{experience-generation process} of the learner. In particular, we argue that risk-sensitivity ensues when the environment's responses appear to anticipate the agent's actions. For example, an agent becomes risk-seeking at test time if during training there is another agent in the environment that anticipates the agent's plans and intervenes to make them more likely to succeed. Furthermore, we demonstrate that ambiguity-sensitivity arises if the agent can experience and exploit novelty via an ensemble. An example of this could be an agent that receives advice from members of a committee, who would naturally disagree in novel situations, and learns from experience how best to deal with conflicting advice. As in standard meta-learning, the resulting policies are the solutions to a numerical optimization problem, with the crucial difference that our protocols generate policies that are not Bayes-optimal. Instead, they are sensitive to the higher-order moments of the return (in the case of risk) or they detect missing information or unknown probabilities, and decide accordingly (in the case of ambiguity).

Would winning the lottery multiple times make you more risk-seeking when gambling? Would having a cycle accident make you more risk-averse when driving? These adaptations are hard to explain when risk- and ambiguity-sensitivity are formulated as a rigid cognitive trait of the agent or as part of its decision-making principle. However, describing risk and ambiguity as a property of the data source offers great explanatory power. It operationalizes their differences and suggests concrete training protocols with minimal cognitive requirements from the agent. And, it is a natural explanation for context-dependent risk- and ambiguity-sensitivity, i.e. the ability of an agent to adjust its sensitivity depending on the situation based on experience, and perhaps more accurately reflects adaptive decision-making in the real world, which is always operating under incomplete information.

The paper is organized as follows. Section~\ref{sec:related-work} puts our work in context with the machine learning and economics literature. In Section~\ref{sec:bayes-risk-and-ambiguity-an-example} we set the stage by explaining the distinction between Bayesian, risk-sensitive, and ambiguity-sensitive agents with the help of an illustrative example. 
Section~\ref{sec:meta-learning-risk-ambiguity} describes the fundamental ideas on how to adapt meta-learning to build agents with risk-sensitivity and ambiguity-sensitivity. Moving on to an experimental evaluation, Section~\ref{sec:experimental-methodology} describes the particular implementation that we used to build our agents, and the details of the environments that we used in our experiments. Section~\ref{sec:experimental-results} focuses on the experimental results confirming the validity of our meta-training procedures. We end with a discussion (Section~\ref{sec:discussion}) and final conclusion.

\section{Related work}\label{sec:related-work}

Risk-sensitivity has been extensively studied in the machine learning and reinforcement learning communities. The main driver for this line of research is to build agents that are safer and more robust to external perturbations by being sensitive to the variability of the reward. The scope is wide, ranging from formulations concerned with the bandit and Markov decision process (MDP) setting~\cite{cassel2018general,howard1972risk,nilim2005robust,tamar2014scaling} to models focusing on the RL setting~\cite{mihatsch2002risk,deletang2021model,fei2020risksensitive} where the dynamics model and reward function are unknown. Some models heavily rely on variance alone to quantify risk, while others can also capture higher order moments~\cite{chow2015risk,fei2020risksensitive} via entropic risk measures for example. Recent research on distributional RL builds directly a distributional Bellman operator as the main tool for learning distribution over returns ~\cite{stanko2019risk,nuria2021riskaverse} which they could be used for learning risk-sensitive policies.

The detection of uncertainty and out-of-distribution data is an important aspect of safety \cite{amodei2016concrete,mcallister2017concrete} and has been studied extensively in the literature \cite{osband2021epistemic}. Approaches include MC-dropout \cite{gal2016dropout}, which for example was used to reduce the velocity of robots in uncertain situations \cite{kahn2017uncertainty}, and deep ensembles \cite{osband2016deep,lakshminarayanan2017simple,tifrea2022semi}. Our contribution on detecting ambiguity uses standard tools such as ensembles~\cite{osband2021epistemic}, however, the novelty lies in using the ensemble output to meta-train an agent that is able to handle ambiguous situations.

The economic literature contains extensive efforts on modeling risk and ambiguity.  Risk models can be traced back to at least 1738 with Daniel Bernoulli's solution to the St.\ Petersburg paradox~\cite{bernoulli1738}. The first models relied on the curvature of the utility function~\cite{pratt1964,arrow1965aspects}, followed by risk-return models that consider a expectation-variance trade-off~\cite{markowitz1952} and  higher-order moments about the expectation of utility~\cite{whittle1981risk,pichler2020entropy}. Ambiguity is a relatively younger concept~\cite{ellsberg1961risk} and harder to pin down. The first models largely rely on worst- and best-case extremes to free themselves from uncertain probabilities~\cite{wald1950statistical,arrow1972optimality}. Recent models use multiple priors~\cite{gilboa1989maxmin,klibanoff2005smooth,ghirardato2004differentiating} or uncertainty sets~\cite{etner2012decision,maccheroni2006ambiguity,hansen2011robustness}.

Our approach differs from the existing literature by going beyond hard-coded responses to uncertainty towards more data-dependent descriptions of  risk- and ambiguity-sensitive behavior: it allows the agent to differentiate between risk and ambiguity and to learn an appropriate, context-dependent response to both from experience.

\section{Bayes, Risk, and Ambiguity: An Illustrative Example} \label{sec:bayes-risk-and-ambiguity-an-example}

We have seen the following distinction of uncertainty\footnote{Some authors use different definitions of \emph{uncertainty}, for example \cite{Knight1921} uses it as a synonym for ambiguity. Here, we use uncertainty as an umbrella term that encompasses both risk and ambiguity.}:
\begin{figure}[h!]
  \centering
  \begin{tikzpicture}
    [edge from parent fork down]
    \node {Uncertainty} [sibling distance = 2cm]
      child {node {Risk\vphantom{g}}}
      child {node {Ambiguity}};
  \end{tikzpicture}
\end{figure}
\\Now we elaborate how they differ in practice.
Consider a game in which you are presented with two transparent boxes containing colored marbles, and you are asked to choose one (Figure~\ref{fig:urn-choices}).  Then, a marble is drawn randomly from your chosen box, and you receive a payoff which depends on the marble's color. You know the payoff for some of the colors, but not all. In each case (a, b, c, \& d) of Figure~\ref{fig:urn-choices}, what would your decision be?

\begin{figure}[t!]
  \centering
  \def\svgwidth{0.8\textwidth}
  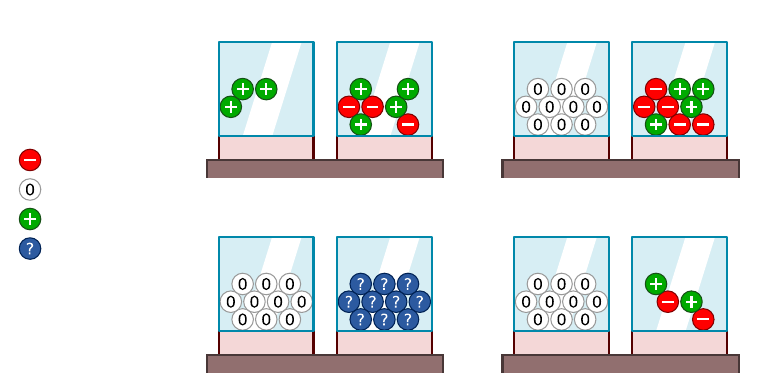
  \caption{Choices with uncertainty. In each case, you are asked to choose either the left or the right box, each containing ten colored marbles in a known proportion. You obtain a payoff corresponding to the color of a marble drawn randomly from the chosen box.}
  \label{fig:urn-choices}
\end{figure}

We next compare the choices an Expected Utility, a Bayes-optimal, a risk-sensitive, and an ambiguity-sensitive agent would make. Intuitively, one can think of an agent as making a choice in two steps: a) assigning a value to each option and b) proposing a (stochastic) decision. By assigning a value, the agent reduces an option with multiple random outcomes to a single representative certain outcome, which is why this reduction function is also known as the \emph{certainty-equivalent} in the economics literature. The second step requires the agent to form a probability distribution over actions, from which a final choice will be drawn.

\paragraph{Notation.} In the rest of this section we adopt the simplified case of an agent issuing a single action $A \sim \pi$  and receiving a single observation $O \sim T(\cdot | a)$ where $a \in \mathcal A := \{\text{`left'}, \text{`right'} \}$ is the realization of $A$. We summarize the generative process with
\begin{equation}
 A \rightarrow  O
\end{equation}
We denote the reward function as $r : \mathcal O \rightarrow \mathbb R$, which maps the observation space $\mathcal O$ to numerical reward values.  

\paragraph{Expected Utility and Bayes-optimality.} A Bayes-optimal agent assigns certainty-equivalents equal to the expected payoff with respect to its beliefs and then chooses the maximizing policy. This conforms to the paradigm of \emph{expected utility} (EU)~ \cite{vonNeumann1947theory,savage1972foundations}, the gold standard in classical microeconomic theory \cite{rubinstein1998modeling} and in reinforcement learning~\cite{sutton2018reinforcement,russell2010artificial,hutter2004universal}. Formally, for our simplified example,  the certainty-equivalent of an EU maximizer is defined as 
\begin{equation}
 Q_{\text{EU}}(a) := \sum_{o\in O} p(o | a) r(o)    
\end{equation}
where $p(o|a)$ plays the role of a subjective prior distribution over observations.  The maximizing policy set is computed as $\Pi^* \in \argmax_{\pi} \sum_{a \in \mathcal A} \pi(a) Q_{\text{EU}}(a)$. Normally, $\Pi^*$ contains a single optimal policy, but in general it also allows for multiple optimal policies. Note that in the case of Bayes-optimal valuation we have
\begin{equation}\label{eq:bayes-simple-value}
 Q_{\text{BO}}(a) := \sum_{\theta \in \Theta } p(\theta) \sum_{o\in O} p_\theta(o | a) r(o)    
\end{equation}
which requires the agent to have a prior $p(\theta)$ over models $\theta \in \Theta$ indexing the observation model $p_\theta$. For now we  focus on how the EU agent makes choices while  leaving  for later analysis the case of Bayes-optimal agent with priors.

An EU agent chooses as follows:
\begin{enumerate}
\item[a)] Since the expected payoffs for the left and right boxes are -0.4 and +0.4 respectively, the agent chooses the right box.
\item[b)] The expected payoffs are both equal to zero, hence the agent is indifferent between the two options: any distribution over the two choices is a valid solution.
\item[c)] Here the agent confronts an ambiguous situation, since the payoffs for the blue marbles in the right box are unknown. The choice of the EU agent is undefined, since there is not enough information available to assign a certainty-equivalent.
\item[d)] As in case (c).
\end{enumerate}

\begin{rem}\label{rem:bayesian-ambiguity}
A reader trained in the expected utility paradigm might object to the conclusions arrived in cases~(c) and~(d). Why can't the agent simply place a (subjective) prior distribution over the payoffs of the blue marble?
This is a fair objection, and we will return to this point later.
\end{rem}

\paragraph{Risk-sensitive.} A risk-sensitive agent is similar to an EU agent, with the crucial difference that the certainty-equivalent is not required to be the expectation, but another function potentially sensitive to the higher-order moments of the payoff (e.g.\ variance, skewness, kurtosis). For simplicity, assume a certainty-equivalent that only accounts for the expectation and  the variance i.e., 
\begin{equation}\label{eq:simple_risk_sensitive_valuation}
    Q_\text{Risk}(a) := \sum_o p(o|a) r(o) + \beta \sum_o p(o|a) \left( r(o) - \bar r(a) \right)^2
\end{equation} 
where $\bar r(a) := \sum_o p(o|a) r(o)$ is the mean reward. Hence a risk-sensitive agent does \emph{not} conform to the expected utility paradigm. If the certainty-equivalent is lower or higher than the expected payoff $\bar r$, then the agent is said to be \emph{risk-averse} ($\beta < 0$) or \emph{risk-seeking} ($\beta > 0 $) respectively. Expected-utility (and Bayes-optimal) agents are thus special cases of risk-sensitive agents, where the certainty-equivalent is equal to expectation (i.e.\ \emph{risk-neutral} achieved with $\beta = 0$). 

A risk-averse agent (e.g. $\beta = -1$) who is sensitive to the mean and the variance would choose as follows:
\begin{enumerate}
\item[a)] Due to symmetry, the variance of either box is the same ($= 0.84$), hence the agent decides based on the mean payoffs alone, choosing the box on the right. 
\item[b)] Both boxes yield an expected payoff equal to zero, but the one on the right has a non-zero variance ($= 1$). Therefore, the agent selects the left box, which yields a certain payoff.
\item[c)] The choice of the risk-sensitive agent is undefined due to the presence of ambiguity.
\item[d)] As in case (c).
\end{enumerate}
Of special interest is the choice in case (b) where the agent penalizes variability of the reward. This type of behavior can be linked to robustness~\cite{singh2020improving} against disturbances and to portfolio theory where volatility is usually undesired. 

\begin{rem}
Note that one can consider two perspectives on the origin of risk. One one hand, one can see risk as the curvature of the utility function via Arrow-Pratt measure~\cite{pratt1964}. This requires a non-linear map from rewards to utility. On the other hand, one can directly account for the higher order moments of the plain reward when computing the certainty equivalent as we did in Equation~\eqref{eq:simple_risk_sensitive_valuation}.
\end{rem}

\paragraph{Ambiguity-sensitive.}
An ambiguity-sensitive agent spots missing information and unknown probabilities, and incorporates this lack of knowledge into the valuation of its decision.
There exist multiple ways to model ambiguity or unknown probabilities but no widely accepted formal definition~\cite{gilboa2016ambiguity}. 
For instance, in the economics literature, some ambiguity models require abandoning the additive property of probabilities by using instead capacities~\cite{schmeidler1989subjective}, others require second order beliefs~\cite{klibanoff2005smooth}(similar to hierarchical Bayesian models) and some others require multiple priors~\cite{gilboa1989maxmin}. In the rest of the paper we adopt the latter approach.

Agents facing unfamiliar situations cannot apply the standard decision-rules valid under risk due to missing information and unknown probabilities. Thus, detecting missing information is key for ambiguity-sensitive agents. 
Agents can use an ambiguity set $\Delta$, containing multiple prior beliefs about the world, to detect missing information~\cite{gilboa1989maxmin}. The size of $\Delta$ and the elements in $\Delta$ determine how much information is missing.  For example, an agent could consider three prior distributions over world-models $\Delta = \left\lbrace p_{\omega_1}(\theta), p_{\omega_2}(\theta)  p_{\omega_3}(\theta)\right\rbrace$ where
$\theta \in \Theta$. 
Each model could represent  different distributions over arbitrary colors or rewards for the unknown marbles in the situation from  Figure~\ref{fig:urn-choices} (c) and (d). Low ambiguity situations are captured when  all the models predict similar futures for a given choice, or when the ambiguity set $\Delta$ is small. However, when each model predicts a different future or there are many different models in $\Delta$ then the agent knows it is facing a highly ambiguous situation. 

Given the ambiguity set $\Delta$, an agent resolves  ambiguity in two ways. On one hand, it can adopt an arbitrary rule to select one of the interpretations. For example, an  \emph{ambiguity-averse} agent favours  pessimistic interpretations by adopting the rule of selecting the worst-case model. The valuation of such agent is
\begin{equation}\label{eq:ambiguity_valuation_simple}
Q_{\text{Amb}} (a) := \min_{p_\omega \in \Delta} \sum_\theta p_{\omega}(\theta) \sum_{o\in O} p_\theta(o | a) r(o).
\end{equation}
As denoted by the equation, there is a minimization operation selecting the model that assumes the worst expected-reward.

\begin{rem}\label{rem:bayes-optimal-valuation}
Note that the term $\sum_\theta p_\omega(\theta) \sum_{o\in O} p_\theta(o | a) r_\theta (o)$ corresponds to a Bayes-optimal agent valuation that has a prior $p_\omega(\theta)$ over models $p_\theta(o|a)$. We comment on the role of priors on a later section. 
\end{rem}

On the other hand, the agent could adopt a \emph{fall-back strategy or default policy}  independent of the particular model predictions~\cite{gilboa2009always}. For instance, in contrast to Equation~\eqref{eq:ambiguity_valuation_simple} where the final valuation directly depends on the elements inside the ambiguity set,  one could construct a strategy $\hat \pi = f( |\Delta|) $ that selects a default-policy $\hat \pi$ (according to some rule $f$) as a function that depends solely on the size of $\Delta$.  An agent freezing or stopping acting when presented with too much ambiguity could be modeled with such type of strategy.

Going back to our example, consider the choices made by a risk-neutral but ambiguity-averse agent:
\begin{enumerate}
\item[a)] This is a well-defined decision problem under risk. Since the agent is risk-neutral, it chooses the box on the right, just as an EU agent would. 
\item[b)] Like in the preceding case, this problem is unambiguous. Furthermore, since the agent is risk-neutral and the expected payoff of either box is equal to zero, it is indifferent between either choice.
\item[c)] The payoffs in the box on the right are unspecified and therefore ambiguous. The payoff of a blue marble could be equal to -1, 0, +1, or any other value, each assumption recommending a (potentially different) choice. An ambiguity-averse agent adopts a pessimistic stance and chooses the left urn with known payoffs.
\item[d)] Now the uncertainty of the box on the right is a mixture between risk and ambiguity. This could be the result of partial disclosure of information, e.g.\ where the payoffs of four of the blue marbles from case (c) were revealed to the agent. As before, the ambiguity-averse agent chooses the left box. 
\end{enumerate}

Learning reduces ambiguity to risk (see \cite{gilboa2016ambiguity} for a formal treatment). Suppose case~(c) is equal to case~(b) in disguise, and then some of the ambiguous payoffs are revealed. As shown in Figure~\ref{fig:urn-ambiguity-interpolation}, the situation in case~(d) could be regarded as an intermediate state of ambiguity between the ambiguous case~(c) and the risky case~(b). In this example, learning is materialized as the observation of unknown marbles, which updates the priors inside the ambiguity set. As more factual evidence is observed and used for learning, the priors become more similar, which renders the minimization in Equation~\eqref{eq:ambiguity_valuation_simple} more constrained and ineffective. In the infinite data limit, the priors become equal (to the best model in the model class) and the minimization has no effect, thus reducing ambiguity-sensitivity from Equation~\eqref{eq:ambiguity_valuation_simple} to plain Bayes-optimality from Equation~\eqref{eq:bayes-simple-value} (or its extension to the risk-sensitive version similar to Equation~\eqref{eq:simple_risk_sensitive_valuation}). 

\begin{rem}
As an additional example, in our experiments shown later, an ensemble of agents will play the role of the ambiguity set. This ensemble is responsible for learning from experiences and, consequently,  reducing  ambiguity to risk.
\end{rem}

\begin{figure}[t!]
  \centering
  \def\svgwidth{0.9\textwidth}
  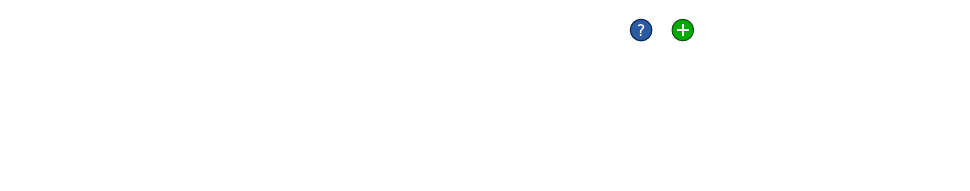
  \caption{Learning reduces ambiguity to risk. Starting from fully ambiguous contents~(c) and revealing the payoffs of the blue marbles could reduce the contents to a mixture between risk and ambiguity~(d) or even to full risk~(b).}
  \label{fig:urn-ambiguity-interpolation}
\end{figure}

\paragraph{Priors over ambiguity.} Can ambiguity be explained away through Bayesian modeling? Following Remark~\ref{rem:bayesian-ambiguity} and~\ref{rem:bayes-optimal-valuation}, one can reduce ambiguity to risk by e.g.\ placing a uniform prior over the payoffs $\{-1, 0, +1\}$ of each blue marble, \emph{and then assuming that the agent knows this\footnote{Hence, it is not the prior of an external observer.}}. Cases~(c) and~(d) then change as follows (cases a~\&~b remain the same):
\begin{enumerate}
\item[c)] The box on the right now has zero mean and variance~$=\frac{20}{3}$ (that is, $\frac{2}{3}$ per marble assuming independence). Therefore, a plain expected-utility agent would be indifferent between the options (due to risk-neutrality) and a risk-averse agent would choose the left box.
\item[d)] The box on the right has zero mean and variance~$=4$, leading to the same conclusions as for (c).
\end{enumerate}

\begin{rem}
Crucially, as highlighted in the infamous Ellsberg's experiments~\cite{ellsberg1961risk}, a single prior biased towards the unknown marbles being red is not enough to explain ambiguity aversion. To see this, one can  keep the prior fixed and switch the reward values of the green and red marbles. Since the prior is fixed, an agent predicting more red than green marbles in the ambiguous urn before the switch,  should keep this prediction after the switch. Therefore, if before the switch the agent chose the non-ambiguous urn, after the switch the agent would also swap urn preferences and pick the ambiguous urn. 
However, this contradicts  the conceptual experiments outlined in~\cite{ellsberg1961risk} where agents keep choosing the non-ambiguous option (thus being ambiguity averse) even after the switch. Multiple priors would be able to explain this type of behavior (i.e. by using Equation~\eqref{eq:ambiguity_valuation_simple}), suggesting then that a single prior is not sufficient to explain ambiguity. 
\end{rem}

All the choices are summarized in Table~\ref{tab:comp-choices}. In the table we see that the choices of the ambiguity-averse agent differ in at least one case from every other agent, even from the ones which place priors over the multiple interpretations of the blue marbles. This implies that ambiguity-sensitivity cannot be explained in terms of risk alone, even when following a Bayesian approach with subjective priors over unknown outcomes.

\begin{table}[t]
  \centering
  \begin{tabular}{lcccc}
    \toprule
    & \multicolumn{4}{c}{Case} \\
    Agent & a & b & c & d \\
    \midrule
    Expected-utility & right & indiff. & \emph{undef.} & \emph{undef.} \\
    Risk-averse & right & \emph{left} & \emph{undef.} & \emph{undef.} \\
    Bayes-optimal, with prior & right & indiff. & \emph{indiff.} & \emph{indiff.}\\
    Risk-averse, with prior & right & \emph{left} & left & left \\
    Ambiguity-averse & right & indiff. & left & left \\
    \bottomrule
  \end{tabular}
  \caption{Comparison of Choices. The table lists the preferences of the different agents for the cases shown in Figure~\ref{fig:urn-choices}. The possible choices are: left, right, indifferent, and undefined. The choices that differ from the ambiguity-averse ones are highlighted in italics.}\label{tab:comp-choices}
\end{table}

\paragraph{Summary.} Agents who are risk- and ambiguity-sensitive make qualitatively different choices (Table~\ref{tab:comp-choices}). Choices involving well-defined probabilities fall under risk, whereas choices with missing information fall under ambiguity. Risk-sensitive agents use the payoff distribution in order to arrive at a valuation; Bayes-optimal agents are special in that they only pay attention to the expected payoff. In contrast, ambiguity-sensitive agents can resort to multiple priors and  default choices in the face of conflicting interpretations. Although we provide a separate treatment for risk and ambiguity, in general an agent could be sensitive to both types of uncertainties at the same time. 

Ambiguity-sensitivity can be regarded as a shortcoming \cite{alnajjar2009ambiguity}. In this view, an agent who is incapable of placing a prior over conflicting interpretations is irrational. Alternatively, ambiguity-sensitivity can also be defended as a rational choice, because acting with confidence on an entirely made-up prior does not seem sensible \cite{gilboa2009always}. Recall the question ``are Cydophines also Abordites?''. Is this statement true with, say, 50\% probability, or do you simply not know? 

\begin{rem}
Instead of uncertain payoffs, we could have had uncertain probabilities, namely, uncertainty about the probability of drawing a particular marble with its associated reward.  This doesn't change the rationale outlined above.
\end{rem}

\begin{rem}
There are additional classifications of uncertainty relevant to machine learning:
\begin{figure}[h!]
  \centering
  \begin{tikzpicture}
    [edge from parent fork down]
    \node {Probability} [sibling distance = 2cm] 
      child {node {Objective}}
      child {node {Subjective}};
  \end{tikzpicture}
  \qquad
  \begin{tikzpicture}
    [edge from parent fork down]
    \node {Game Theory} [sibling distance = 2.5cm] 
      child {node {Player/Nature}}
      child {node {Imperfect}}
      child {node {Incomplete}};
  \end{tikzpicture}
\end{figure}
\\The first classification distinguishes between two types of probabilities; therefore it can be considered a sub-classification of risk. These can be either objective (i.e. physical/aleatoric), often interpreted as originating from a source that is external to the agent; or subjective (i.e. degrees of belief/epistemic), coming from within the agent. While the distinction has played a significant role in recent work on Bayesian deep learning~\cite{kendall2017uncertainties}, it is beyond the scope of this work. 

The second diagram depicts a selection of uncertainty distinctions made in game theory. The first and most fundamental distinction is between Nature and players: Nature is characterized by a probability distribution over strategies, whereas players have preferences but no probabilities over strategies. Then, in normal form games (say, a two-player game), one can introduce uncertainty by randomizing the payoffs (imperfect) or simply by having blank payoff entries in the game matrix (incomplete) \cite{vonNeumann1947theory,osborne1994course}. We claim that the uncertainties in game theory can ultimately be reduced to risks and ambiguities; in particular, the uncertainty of Nature's choices and in imperfect information games corresponds to risk, and the uncertainty of player choices and in incomplete information games to ambiguities. 
\end{rem}

\section{Meta-learning Risk and Ambiguity} \label{sec:meta-learning-risk-ambiguity}
This section focuses on describing the two modifications to the standard meta-training protocol that encourage risk- and ambiguity-sensitivity, respectively. We leave the experimental details to Section~\ref{sec:experimental-methodology}.

\subsection{Bayes-optimal}\label{sec:bayes-optimal}
We briefly review meta-learning in order to set the stage. For simplicity we focus on a minimal example extending the setting from the previous section with a state $S$ and a latent task parameter $\theta$. 

Consider an interaction between the task and the agent that generates the following random variables in a sequential manner:
\[
  \theta \rightarrow S \rightarrow A \rightarrow O.
\]
Here $\theta$ is a latent task parameter, $S$ and~$O$ are observations (stimulus and outcome respectively), and~$A$ is the agent's action. These are drawn from the following generative process:
\begin{equation}\label{eq:gen-process}
  \theta \sim p_\Theta (\cdot), 
  \qquad S \sim p( \cdot \mid \theta), 
  \qquad A \sim \pi(\cdot \mid s),
  \qquad O \sim T(\cdot \mid \theta, s, a).
\end{equation}
All variables except $A$ depend (causally) on the entire history. The action $A$ cannot depend on $\theta$ because it is not seen by the agent. Due to this, the observations~$S$ and~$O$ are perceived by the agent as being drawn from the marginals
\[
  S \sim p(\cdot) 
    = \sum_\theta p_\Theta(\theta) p(\cdot \mid \theta),
  \quad\text{and}\quad O \sim \bar T(\cdot \mid s, a) 
    = \sum_\theta P(\theta \mid a, s) T(\cdot  \mid \theta, a, s).
\]
This is key for meta-learning, as it encourages an agent to learn the statistical effects of the latent variable~$\theta$ implicitly. The optimal strategy (in the sense of minimal loss/regret) in light of marginalized observations is to maintain a posterior belief over the latent variable and predict/act by marginalizing over this belief---see \cite{ortega2019meta}. A meta-learner that minimizes loss will thus produce a solution that \emph{behaves indistinguishably} from predicting/acting according to the Bayesian posterior predictive distribution, which requires inferring the value of the latent variable based its observable statistical effects.

The goal of the Bayes-optimal agent is to find a policy maximizing the expected payoff. This specific decision problem requires a stimulus-dependent policy $\pi^\ast(\cdot \mid s)$, that is,
\begin{equation}\label{eq:bayes-opt}
  \pi^\ast(\cdot \mid s) = \argmax_{\pi(\cdot \mid s)} 
    \sum_a \pi(a \mid s) Q(s, a),
  \quad\text{where}\quad Q(s, a) := \sum_{o} \bar T(o \mid s, a) r(o)
\end{equation}
are the expected payoffs (Q-values) of the outcome given an initial observation and an action, and where $r$ is a reward function. Because the $\bar T(o \mid s, a)$ in the definition of $Q(s, a)$ entail computing posterior probabilities $P(\theta \mid s, a)$, the resulting agent acts \emph{as if} it were holding probabilistic beliefs over the latent task parameter~$\theta$. The agent might internally compute these beliefs, but this is, in general, not necessary~\cite{mikulik2020meta,ortega2019meta}.

To solve numerically for the objective~\eqref{eq:bayes-opt}, memory-based meta-learning optimizes a Monte-Carlo approximation w.r.t.\ the policy parameters. Specifically, consider the approximation of the expected reward $R \sim r(O)$,
\begin{equation}\label{eq:mc-approx}
  \mathbb{E}[R \mid \pi] =
  \sum_{\theta, s} p_\Theta(\theta) P(s\mid \theta) \Bigl[
    \sum_a \pi(a \mid s) Q(s, a) \Bigr]
  \approx \frac{1}{N} \sum_n r(o^{(n)}),
\end{equation}
where the $o^{(n)}$ on the r.h.s.\ are sample outcomes from the generative process~\eqref{eq:gen-process}. Maximizing this objective w.r.t.\ the policy parameters evaluated on batches generated by~\eqref{eq:gen-process} yields a Bayes-optimal policy \cite{ortega2019meta,mikulik2020meta}.

\subsection{Risk-sensitive}\label{sec:basic-risk-sensitive}
What changes to the meta-training protocol outlined above are required  so that the agent additionally cares about the higher-order moments of the return? We modify the meta-training protocol from above by tweaking the distribution over observations to be sensitive to the valuations of the agent. That is, compared to the generative process~\eqref{eq:gen-process} in the Bayes-optimal case, the distribution over the outcome~$O$ is now also conditioned on the current Q-value estimates $\hat Q$, which can be written as:
\begin{equation}\label{eq:gen-risk}
  O \sim T(\cdot \mid \theta, S, A)
  \qquad \longrightarrow \qquad
  O \sim \rho(\cdot \mid \theta, \hat Q, S, A).
\end{equation}
An interpretation of this modification is that the agent now considers itself to be embodied, as the Q-value estimates of the agent are part of the environment and these Q-values can be used to fully describe the agent. While this does not mean that the agent considers its own explicit self (e.g. source code) as part of the environment, it is considering its own behavior by including the Q-values as a component of the transition dynamics. Alternatively, the environment could contain other entities with  theory of mind~\cite{frith2005theory}, allowing for estimates of what the agent might do.

The proposition below considers a simplified case (without states and latent variables) that illustrates how to modify the environment such that the agent valuations are risk-sensitive.  First, we propose the following modification to the transition which depends on the estimate $\hat V$ (in the sequential case $\hat Q$ and $\hat V$ become the same object)  and the original transition $T(o|a)$: 
\begin{equation}\label{eq:simple_modified_transition}
 \rho(o|a,\hat V) := \frac{1}{Z} T(o|a) e^{\beta \hat V(o)}.
\end{equation}
with normalizing constant $Z$ and hyper-parameter $\beta$.
Second, since the dynamics have changed, the agent now acquires the following valuation over decisions
\begin{equation}\label{eq:value-risk-simple}
    \hat Q(a):= \sum_o \rho(o|a, \hat V) \hat V(o).
\end{equation}

\begin{proposition}\label{prop:simple-risk-sensitive-valuation}
Let $\hat V(o)$ be the current valuation that the agent assigns to $o$. Further assume that the environment is modified to be $\rho(o|a,\hat V)$ from Equation~\eqref{eq:simple_modified_transition}. Then the Q-values from Equation~\eqref{eq:value-risk-simple}  are risk-sensitive in the sense that they are a function of the expectation and the variance of $V$ under $T$ depending on $\beta$. That is
\begin{equation}
    \hat Q(a) \approx \mathbb E_{T(\cdot |a)}[ \hat V(o) ] +  \beta \mathbb{VAR}_{T(\cdot |a)} [\hat V(o)].
\end{equation}
\end{proposition}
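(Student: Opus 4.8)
The plan is to compute $\hat Q(a)$ in closed form, recognise it as the derivative of a cumulant generating function, and Taylor-expand in $\beta$ around $0$.

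\textbf{Step 1 (closed form).} Substituting the modified transition~\eqref{eq:simple_modified_transition} into~\eqref{eq:value-risk-simple} and writing the normaliser as $Z = Z(a,\beta) = \sum_o T(o\mid a)\,e^{\beta\hat V(o)}$ gives
\begin{equation*}
  \hat Q(a)
  = \frac{\sum_o T(o\mid a)\,e^{\beta\hat V(o)}\,\hat V(o)}{\sum_o T(o\mid a)\,e^{\beta\hat V(o)}}
  = \frac{\mathbb{E}_{T(\cdot\mid a)}\!\bigl[\hat V(o)\,e^{\beta\hat V(o)}\bigr]}{\mathbb{E}_{T(\cdot\mid a)}\!\bigl[e^{\beta\hat V(o)}\bigr]},
\end{equation*}
i.e.\ the mean of $\hat V$ under the exponentially tilted version of $T(\cdot\mid a)$.

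\textbf{Step 2 (cumulant generating function).} Let $\Lambda_a(\beta) := \log\mathbb{E}_{T(\cdot\mid a)}[e^{\beta\hat V(o)}]$. Differentiating under the sum, $\Lambda_a'(\beta)$ equals exactly the ratio in Step 1, so $\hat Q(a) = \Lambda_a'(\beta)$. Since $\Lambda_a(\beta) = \sum_{k\ge1}\kappa_k\beta^k/k!$ with $\kappa_k$ the $k$-th cumulant of $\hat V(o)$ under $T(\cdot\mid a)$, and $\kappa_1 = \mathbb{E}_{T(\cdot\mid a)}[\hat V(o)]$, $\kappa_2 = \mathbb{VAR}_{T(\cdot\mid a)}[\hat V(o)]$, we obtain
\begin{equation*}
  \hat Q(a) = \Lambda_a'(\beta) = \mathbb{E}_{T(\cdot\mid a)}[\hat V(o)] + \beta\,\mathbb{VAR}_{T(\cdot\mid a)}[\hat V(o)] + O(\beta^2),
\end{equation*}
and dropping $O(\beta^2)$ yields the stated approximation. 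Alternatively one can skip cumulants entirely: expand $e^{\beta\hat V(o)} = 1 + \beta\hat V(o) + O(\beta^2)$ in the numerator and denominator of Step 1 and simplify the ratio to first order, which reproduces $\mathbb{E}[\hat V] + \beta(\mathbb{E}[\hat V^2]-\mathbb{E}[\hat V]^2)$.

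\textbf{What needs care.} There is no deep obstacle here; the subtlety is the meaning of ``$\approx$''. The identity $\hat Q(a) = \Lambda_a'(\beta)$ is exact, whereas the quadratic form is only its leading part in $\beta$. To make the truncation legitimate I would note that on a finite outcome space $\mathcal O$ (as in all the examples) $\Lambda_a$ is analytic in $\beta$, so the series converges and the remainder is genuinely $O(\beta^2)$ as $\beta\to0$; for unbounded $\hat V$ one additionally needs $\mathbb{E}_{T(\cdot\mid a)}[e^{\beta\hat V(o)}]<\infty$ in a neighbourhood of $\beta=0$. I would also flag that the discarded higher-order terms carry the skewness $\kappa_3$, kurtosis, etc., so the modified dynamics in fact induce sensitivity to all higher moments of $\hat V$, with the mean--variance trade-off of Equation~\eqref{eq:simple_risk_sensitive_valuation} recovered as the leading behaviour and the sign of $\beta$ selecting risk-seeking versus risk-averse valuations.
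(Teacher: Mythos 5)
Your proof is correct and follows essentially the same route as the paper's: a first-order Taylor expansion of $\hat Q(a)$ in $\beta$ about $0$, with the linear coefficient identified as the variance of $\hat V$ under $T(\cdot\mid a)$. The paper obtains that coefficient by differentiating the normalised ratio directly with the product rule and then setting $\beta=0$, whereas your identity $\hat Q(a)=\Lambda_a'(\beta)$ via the cumulant generating function packages the same computation more cleanly and additionally makes the higher-order (skewness, kurtosis) sensitivity explicit.
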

\begin{proof}
The proof is in Appendix~\ref{appendix:risk-theory-single-step}.
\end{proof}

Compared to the Bayes-optimal case in~\eqref{eq:bayes-opt}, the valuation from  Proposition~\ref{prop:simple-risk-sensitive-valuation} is sensitive to the variance of the value estimates. We achieve sensitivity to risk by means of the objective function~\eqref{eq:value-risk-simple} and the appropriate modified transition dynamics $\rho$ from Equation~\eqref{eq:simple_modified_transition} (for a more detailed treatment see Appendix~\ref{appendix:risk_theory}).

\begin{rem}
Risk-sensitivity does not arise for any choice of $\rho$.
Other forms of $\rho$ (different from Equation~\eqref{eq:simple_modified_transition}) could exist that generate risk-sensitivity. Discovering what forms of $\rho$ generate or not risk-sensitivity is out of the scope of this work.  
\end{rem}

\subsection{Ambiguity-sensitive} \label{sec:basic-ambiguity-sensitive}
Inducing ambiguity-sensitivity requires designing a mechanism that detects and uses novelty, which we do by using an ensemble and a meta-policy.

Let $\mathcal{S}$ be a set of stimuli (i.e. the observations presented to the agent), and partition it into $\mathcal{S}_\text{risky}$ and $\mathcal{S}_\text{ambiguous}$.  Now, consider a collection of $K$ agents with policies $\pi_1, \ldots, \pi_K$ respectively, where each policy~$\pi_k$ is implemented as a high-capacity function approximator with different initialization. If each member of the ensemble optimizes the Bayes-objective~\eqref{eq:bayes-opt} where the stimuli are drawn from the distribution $P(S \mid \theta, \mathcal{S}_\text{risky})$ restricted on $\mathcal{S}_\text{risky}$, the performance of any two fully trained  policies $\pi_j$ and $\pi_k$ will be approximately equal:
\[
 \mathbb{E}[R \mid 
   \pi_j, \mathcal{S}_\text{risky}]
 \approx \mathbb{E}[R \mid 
   \pi_k, \mathcal{S}_\text{risky}],
\]
since only one Bayes-optimal solution exists.

However, if we evaluate the same policies on $\mathcal{S}_\text{ambiguous}$, then they will differ in their performance given sufficiently high capacity and sufficiently different initial parameters (see~\cite{baek2022agreement} for an empirical treatment with standard neural networks): 
\[
 \mathbb{E}[R \mid 
   \pi_j, \mathcal{S}_\text{ambiguous}]
 \not\approx \mathbb{E}[R \mid 
   \pi_k, \mathcal{S}_\text{ambiguous}].
\]
This is due to the bias-variance trade-off: models with high capacity may have low bias but typically have large variance, implying that the resulting policies differ from each other. Since the ensemble disagrees on what to predict when presented with a stimulus drawn from $\mathcal{S}_\text{ambiguous}$, said stimuli are ambiguous. Exploiting ensemble's disagreement  for detecting novel situations is common practice~\cite{osband2016deep,lakshminarayanan2017simple,osband2021epistemic}. 

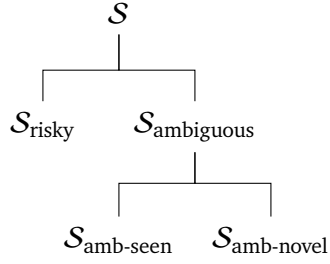
\begin{figure}[t]
  \centering
  \begin{tikzpicture}
    [edge from parent fork down]
    \node {$\mathcal{S}$} [sibling distance = 2cm] 
      child {node {$\mathcal{S}_\text{risky}$}}
      child {node {$\mathcal{S}_\text{ambiguous}$}
        child {node {$\mathcal{S}_\text{amb-seen}$}}
        child {node {$\mathcal{S}_\text{amb-novel}$}}
      };
  \end{tikzpicture}
  \caption{Partioning of stimuli. The ensemble is trained on the set of risky stimuli $\mathcal{S}_\text{risky}$. The meta-policy is trained on the set of risky and seen stimuli, i.e.\ $\mathcal{S}_\text{risky} \cup \mathcal{S}_\text{amb-seen}$. Since the meta-policy is a function of the ensemble output and not the stimulus, it cannot distinguish between stimuli in $\mathcal{S}_\text{amb-seen}$ and $\mathcal{S}_\text{amb-novel}.$}\label{fig:stimulus-partition}
\end{figure}

We leverage the ensemble's ability to detect ambiguity in order to formulate an agent that can respond to it. First, assume we have access to a subset of $\mathcal{S}_\text{ambiguous}$, the set of ambiguous stimuli, for training so that we can further partition it into $\mathcal{S}_\text{amb-seen}$ and $\mathcal{S}_\text{amb-novel}$ (see Figure~\ref{fig:stimulus-partition}).
Next, we introduce a \emph{meta-Q-function} of the form
\begin{equation}\label{eq:meta-values}
 Q^m(s,a) := f(Q^{\pi_1}(s, a), \ldots, Q^{\pi_K}(s, a)),
\end{equation}
where $Q^\pi$ are the values of policy $\pi$ and $f$ a function that can be trained with data experience. The meta-Q-function, which resembles a mixture of experts model, is used to construct  a \emph{meta-policy} (we also use the term \emph{top-policy}) $\pi^m$ of the form
\begin{equation}\label{eq:meta-policy}
  \pi^m(a \mid s) = \argmax_\pi \sum_a \pi(a|s) Q^m(s,a),
\end{equation}

The meta-policy essentially reduces the ensemble's policy profile to a single distribution over actions and it is trained by optimizing \eqref{eq:bayes-opt} with stimuli drawn from 
\[
  P(S \mid \mathcal{S}_\text{risky} \cup \mathcal{S}_\text{amb-seen})
\]
whilst holding the ensemble fixed. Training on both $\mathcal{S}_\text{risky}$ and $\mathcal{S}_\text{amb-seen}$ familiarizes the meta-policy with the distinction between risky and ambiguous stimuli (including a hypothetical set of test stimuli $\mathcal S_\text{amb-novel}$). 
It knows how to act under both an agreement and disagreement of the ensemble. If the ensemble agrees because it is presented with a risky stimulus, it will produce the optimal action distribution, which agrees with all the members of the ensemble:
\[
  \pi^m(a|s) \approx \pi_k(a|s)
  \quad \text{if $s \in \mathcal{S}_\text{risky}$ for all $k$}.
\]
However, if the ensemble is presented with an ambiguous input (including test stimuli $\mathcal S_{\text{amb-novel}}$), then the ensemble output variance will be nonzero, and thus the meta-policy will be distinct from at least one ensemble member:
\[
  \exists k \qquad \text{such that} \qquad  \pi^m(a|s) \not\approx \pi_k(a|s)
  \quad \text{if $s \in \mathcal{S}_\text{ambiguous}$}.
\]

The meta-policy's behavior under ambiguous situations ($\mathcal{S}_\text{amb-seen}$ and $\mathcal{S}_\text{amb-novel}$) highly depends on the rewards associated with the stimuli  $\mathcal S_\text{amb-seen}$. That is, the meta-policy will be cautious or audacious (that is, ambiguity-averse or -seeking), depending on whether the ensemble's response to the stimuli in $\mathcal{S}_\text{amb-seen}$ led to low or high payoffs respectively.

\begin{rem}
The meta-policy can also implement default strategies that are independent of the ensemble valuations. Doing so would require $Q^m$ to also depend directly on the state-action pair $(s,a)$. That is
\begin{equation}\label{eq:meta-values-extra-state}
 Q^m(s,a) := f(s, a, Q^{\pi_1}(s, a), \ldots, Q^{\pi_K}(s, a)),
\end{equation}
This type of valuation has the capability of implementing a context-dependent ambiguity-dependent meta-policy. However, practitioners must be careful when training such meta-policy since it opens the possibility for  bypassing all the information coming from the ensemble. This bypass would render the meta-policy useless when presented with  a truly novel ambiguous situation $\mathcal S_{\text{amb-novel}}$. How to solve this problem for a more robust context-dependent ambiguity-sensitivity is left for future work.
\end{rem}

\paragraph{Summary.} We have shown two simple mechanisms to induce risk and ambiguity sensitivity. Both of them depend on the history of the agent's observations, actions and rewards. This way agents can become uncertainty-seeking or -averse depending on the experiences within a particular context. This could be a promising line of future work since it would allow for flexible data-dependent risk- and ambiguity-sensitivity.

\section{Experimental Methodology}\label{sec:experimental-methodology}
Here, we describe implementation details of the agents outlined above. The crucial difference  between the previous section and this one is two-fold. First, we consider the sequential decision-making case and, second,  we also  explain the experimental details  and methodology that we use for our experiments from Section~\ref{sec:experimental-results}.

\subsection{Agent and Training Details}

\subsubsection{Vanilla Meta-Learning: Baseline Methodology}
We consider Markov Decision Processes (MDPs) which are defined as the tuple ($\mathcal X, \mathcal A, T_\theta, r_\theta, \gamma)$ where $\mathcal X$ is the state space, $\mathcal A$ is the action space and $\gamma$ the discount factor. The transition function $T_\theta: \mathcal X \times \mathcal A \rightarrow P(\mathcal X)$ and reward function $r_\theta: \mathcal X \times \mathcal A \rightarrow \mathbb R$  depend on the latent parameters $\theta$ that completely specify the task.  The agent's policy depends on the current state $x_t$ and a history variable $h_t$ of  state-action-reward time-step data. Our memory-based architectures summarize history via $h_t := (m_t, a_{t-1}, r_{t-1})$, i.e. a memory variable $m_t \in \mathcal M$ and the  action-rewards from the previous time-step $(a_{t-1}, r_{t-1})$~\cite{wang2016learning}. In our case, the memory space $\mathcal M$ is $\mathbb R ^ d$, and corresponds to the memory of an LSTM trained using Backpropagation Through Time. Our policies are of the form $\pi_\omega(a_t, m_{t+1} |x _t, h_t)$. Trajectories $\tau := (x_0, m_0, a_0, \dots x_H, m_H)$ are distributed according to 
\begin{equation}\label{eq:traj_prob}
  p\left(\tau \big| \theta, \pi_\omega, \hat h \right):=  p_\theta(x_0) \prod_{t=0}^{H-1} T_\theta(x_{t+1} | x_t, a_t)\pi_\omega(a_t, m_{t+1} | x_t, h_t), 
\end{equation}
given  the latent parameters $\theta$, the agent's policy $\pi_\omega$ with parameters $\omega$ and some arbitrary and \emph{fixed} initialization of the agent's initial history $\hat h:=(m_0$, $a_{-1}$, $r_{-1})$. 

Given a particular task $\theta$ and policy $\pi_\omega$,  the $Q$-function is defined as
\begin{align}\label{eq:Qvalues_specific_theta}
Q^{\pi_\omega}_\theta(x, a, h) & :=  \mathbb{E}  \left[  \lim_{H \rightarrow \infty} \sum_{t=0}^H \gamma^t r_\theta (x_t, a_t) \, \bigg| \, x = x_0, a = a_0, h=\hat h\right].
\end{align}
The value function is computed as  $V^{\pi_\omega}_\theta (x, h) := \max_a Q^{\pi_\omega}_\theta(x, a, h)$.  The optimal policy for a particular $\theta$ and $x$ is $\pi^*_\theta = \arg \max_\pi V^\pi_\theta (x, h)$ which is the same for all $x \in \mathcal X$~\cite{puterman2014markov}.

We use the R2D2 learning algorithm~\cite{kapturowski2018recurrent} to meta-train our memory-based agents. In short, R2D2 learns a Q-function via $n$-step temporal-difference error updates with additional modifications to the loss function to ease the optimization procedure (see~\cite{kapturowski2018recurrent} for more details). 
We use the standard meta-training protocol, namely,  we sample an environment $\theta \sim P_\theta(\cdot)$ and collect an episodic trajectory $\tau$ used to approximate Equation~\eqref{eq:Qvalues_specific_theta} with the return $\hat Q_\theta(\tau) := \sum_{t=0}^H\gamma^t r_\theta(x_t, a_t)$. The approximation $\hat Q_\theta$ is used as a signal for learning our agent's Q-vector with each entry being the q-values for a particular action 
\begin{equation}\label{eq:Qfunction_parametric}
  \mathbf Q_\omega (x_t, h_t) := \left[ Q_\omega (x_t, a^1, h_t),  \dots Q_\omega (x_t, a^{|\mathcal A |}, h_t)  \right].
\end{equation}
For more details about the loss and optimization procedure see~\cite{kapturowski2018recurrent}. 
    Our agent implementation instantiates the policy via $\pi_\omega(a' | x_t, h_t) = \delta_{a'a^\star}$ where $ a^\star:= \argmax_a Q_\omega (x_t, a, h_t) $ and  $\delta$ is the Kronecker delta.
The \emph{memory} $m_t$ is  reset at the beginning of each episode and it is updated along the trajectory. 

\subsubsection{Methodology for Meta-Learning Risk-Sensitivity}\label{sec:summary_methodology_risk}

\begin{figure}[t]
\begin{minipage}{0.49\textwidth}
\begin{algorithm}[H]
\footnotesize
\caption{An off-policy algorithm to learn risk-sensitive policies.}\label{alg:risk}
\begin{algorithmic}[1]
\State{\textbf{Asynchronous data collection procedure:}}
\For{$s \gets s+1$ until $s==max \: steps$}
    \State Sample $\theta \sim P_{\Theta_\text{risky}}$ and the first state $x_0$
    \State Initialize the agent's memory $m_0 \sim \mathcal M$
    \For{$t \gets t + 1$ until episode ends}
        \State Sample $a_t, m_{t+1} \sim \pi_\omega(\cdot | x_t, h_t)$
        \State Update $r_t \gets r_{\theta}(x_t, a_t)$
        \State Sample $x^{k}_{t+1} \sim T_{\theta} (\cdot | x_t, a_t)$ for $k \in \{0, ..., N\}$
        \State Sample $\tilde{k} \sim softmax_{k \in \{0, ..., N\}} \: V(x^{k}_{t+1}, h_{t+1})$
        \State Update $x_{t+1} \gets x^{\tilde{k}}_{t+1}$
        \State Add $(x_t, r_t, a_t, x_{t+1})$ to the replay buffer
    \EndFor
\EndFor
\State{\textbf{Asynchronous Training:}}
\State Sample batch from replay buffer
\State Update parameters with R2D2 (see \cite{kapturowski2018recurrent})
\end{algorithmic}
\end{algorithm}
\end{minipage}
\hfill
\begin{minipage}{0.49\textwidth}
\begin{algorithm}[H]
\footnotesize
\caption{An off-policy algorithm to learn ambiguity-sensitive policies.}\label{alg:ambiguity}
\begin{algorithmic}[1]
\State{\textbf{Asynchronous data collection procedure:}}
\State \textbf{Input}: Ensemble of $K$ independently trained Q-functions $(Q_{\omega_k})_{k \in \{0, ..., K\}}$ on $P_{\Theta_\text{risky}}$.
\For{$s \gets s+1$ until $s==max \: steps$}
    \State Sample $\theta \sim P_{\Theta_\text{amb-seen}}$ and the first state $x_0$
    \State Initialize top-level agent's memory $m_0 \sim \mathcal M$
    \State Initialize ensemble agents' memories $m^{k}_0\sim \mathcal M$
    \For{$t \gets t + 1$ until episode ends}
        \State Infer $Q_{\omega_k}(x_t, m^k_t, a_{t-1}, r_{t-1})$ for $k \in \{0, ..., K\}$
        \State Update $m^k_{t+1} \sim \pi_{\omega_k}(\cdot | x_t, m_t, a_{t-1}, r_{t-1})$
        \State Sample $a_t, m_{t+1} \sim \pi_{meta}(a_t, m_{t+1} | f(Q^{ens}), m_t, a_{t-1}, r_{t-1})$
        \State Update $x_{t+1} \sim T_{\theta} (x_{t+1} | x_t, a_t)$
        \State Add $(x_t, r_t, a_t, x_{t+1})$ to the replay buffer
    \EndFor
\EndFor
\State{\textbf{Asynchronous Training:}}
\State Sample batch from replay buffer
\State Update top-policy parameters with R2D2 (see \cite{kapturowski2018recurrent})
\end{algorithmic}
\end{algorithm}
\end{minipage}
\end{figure}

The specific mechanism that we use to instantiate Equation~\eqref{eq:gen-risk} is to modify the transition dynamics depending on the current Q-value estimations of the agent. The basic idea is to construct $\rho_{\theta\omega}$ that deviates from the natural transition dynamics $T_\theta$ in a way that  favors (risk-seeking) or is against the agent's expectations (risk-averse)~\cite{mohammedalamen2021learning}. For example, for the risk-seeking case,  the probability of the next state $x'$ is increased (on average), i.e., $\rho_{\theta, \omega} (x' |x,a) > T_\theta(x' | x, a)$, when the value of $x'$ is \emph{higher} than the average value i.e.,  $V_\omega (x', \tilde h) > \mathbb E_{x'} [V_\omega (x', \tilde h)]$ for the current memory-action-reward (history) context $\tilde h$ (which is fixed) where $V_\omega(x', \tilde h):= \max_a [ \mathbf  Q_\omega (x', \tilde h)]_a $.
Below we propose two mechanisms satisfying this condition. 

\paragraph{Mechanism 1.}
A naive approach could be (for the risk-seeking case) to draw $N$ samples from $T_\theta$ i.e., $\mathcal D_x = \{x'_i \}_{i=1}^N$ with each $x'_i \sim T_\theta$, and select the one that has highest value according to the agent's value function $x_{\text{selected}}' = \argmax_{x' \in \mathcal D_x} V_\omega(x', \tilde h)$. Note that with this modification $x'_{\text{selected}}$ are sampled from a distribution $\rho_{\theta,\omega}$ (which we do not define) different from $T_\theta$. Larger $N$ creates environments that can deviate more from $T_\theta$. Smaller $N$ brings $\rho_{\theta\omega}$ closer to $T_\theta$. See extreme value theory for a theoretical treatment~\cite{de2006extreme}.

\begin{rem}\label{rem:mechanism1}
For $N=1$ we have, by construction, that $\rho_{\theta\omega} = T_\theta$.  In our initial pilot experiments, we found $N=2$ to generate too extreme risk-sensitive behavior in preliminary experiments, therefore, we propose an alternative smoother mechanism below.
\end{rem}

\paragraph{Mechanism 2.}
The problem described in Remark~\ref{rem:mechanism1} can be solved by using a smoother parameterization using $\beta$ (instead of $N$) in the following 
\begin{equation}\label{eq:rho}
    \rho_{\theta\omega} (x_{t+1} | x_t, a_t, h_t) := \frac{T_\theta(x_{t+1}| x_t, a_t) e^{\beta V_\omega (x_{t+1},  h_t)}}{Z}
\end{equation}
where $Z$ is just normalizing over $x_{t+1}$. Positive $\beta$ generates streams of experiences that are above the agent's current expected values (on average), whereas for negative $\beta$ we obtain the opposite. The formula is a softmax(min) approximation of the max(min) of $V_\omega(x', \tilde h)$, respectively. Setting $\beta = 0$ recovers $\rho_{\theta,\omega} = T_\theta$, $\beta = +\infty$ gives $x_{\text{selected}}' = \argmax_{x' \in \mathcal D_x} V_\omega(x', \tilde h)$ and $\beta = -\infty$ gives $x_{\text{selected}}' = \argmin_{x' \in \mathcal D_x} V_\omega(x', \tilde h)$, as expected. In practice, as we don't have access to the whole distribution $T_\theta$, we draw N samples (as in mechanism 1) and apply the softmax to the values of the sampled next states. We show this approximation converges to Equation~\eqref{eq:rho} when $N \to +\infty$ (see Proposition~\ref{prop:convergence_to_rho} in Appendix~\ref{appendix:protocol_details_risk}). The concrete procedure is described in Algorithm~\ref{alg:risk}.

Figure~\ref{fig:risk_one_step}C depicts the sampling methodology that we employ in practice. This is an approximation to Equation~\eqref{eq:rho}.  As shown, several proposal samples are drawn from the risky urn according to $T$. This corresponds to several next-states $s'$ which are evaluated according the the agent's value function. With this information a proxy distribution is constructed which depends on the $\beta$ parameter. For positive $\beta$ (yellow bars) we see that the proxy distribution is skewed towards the green marbles, whereas for negative $\beta$ (red bars) it is skewed towards the red marble. Finally, using the proxy distribution we sample the final sample which, in this example, corresponds to a green marble for positive $\beta$, or to a red marble for negative $\beta$.

\begin{rem}
The transition dynamics from Equation~\eqref{eq:rho} link to the concept of entropic risk measures. See Appendix~\ref{appendix:risk_theory} for a more formal treatment.
\end{rem}

\subsubsection{Methodology for Meta-Learning Ambiguity-Sensitivity} \label{sec:methodology_ambiguity}
To instantiate Equation~\eqref{eq:meta-policy} we make use of an ensemble of agents and a meta-policy, all of them being independent R2D2 agents.

\paragraph{Task partitions.} In Section~\ref{sec:basic-ambiguity-sensitive} we described a partition of the stimuli $\mathcal S_\text{risky}, \mathcal S_\text{amb-seen}, \mathcal S_\text{amb-novel}$. In our experiments, stimuli $s_t$ at time $t$ are tuples $s_t := (x_t, m_t, a_{t-1}, r_{t-1})$. These are obtained from trajectories $\tau$ distributed according to Equation~\eqref{eq:traj_prob} where the latent parameters $\theta$ are sampled from one of the following three task distributions $P_{\Theta_{\text{risky}}}, P_{\Theta_{\text{amb-seen}}}, P_{\Theta_{\text{amb-novel}}}$. Task distributions  have disjoint sample spaces  $\Theta_{\text{risky}}, \Theta_{\text{amb-seen}}$ and $\Theta_{\text{amb-novel}}$. This way, we can properly teach the agent how to act in known and unknown situations.

\paragraph{Ensemble.} We use an ensemble of $K$ Q-functions denoted as $\mathbf Q^{\text{ens}} := [\mathbf Q_{\omega_1}, \dots \mathbf Q_{\omega_K} ]$ where each $\mathbf Q_{\omega_i}$  follows Equation~\eqref{eq:Qfunction_parametric}. We independently train each member of the ensemble with R2D2 on the set $\mathcal S_{\text{risky}}$ which is composed by sub-trajectories extracted from $\tau \sim p(\cdot | \theta_{\text{risky}}, \pi_{\omega_i}, m_0, a_{-1}, r_{-1})$ following Equation~\eqref{eq:traj_prob}. Therefore, each member of the ensemble learns to be Bayes-optimal for the stimuli $\mathcal S_{\text{risky}}$ with a prior over tasks $P_{\Theta_{\text{risky}}}$.

\paragraph{Top-policy.} The meta-policy is of the form $\pi_\text{meta} (a_t, m_{t+1} | g( \mathbf Q^{\text{ens}}), m_t, r_{t-1}, a_{t-1} )$ where $g: \mathbb R^{| \mathcal A | \times K} \rightarrow \mathbb R^{| \mathcal A | \times \ell}$. Naively, the function $g$ can simply be the identity function in which case $\ell = K$. This might not be the best choice since it carries the risk of leaking state information (i.e. the ensemble Q-values give information about the state) which is an undesirable property because it enhances the meta-policy's sensitivity to the state. Since our aim  is that the meta-policy is invariant to the state but sensitive to the \emph{state-novelty}, we should minimize for state-information leakage. We do so by further compressing the ensemble's output by letting $g$ only output the first and second order moments of the Q-values for each action. In this case $\ell = 2$.  In addition, in order for the meta-policy to be always in-distribution it needs to know how to act under both types of stimuli $\mathcal S_\text{risky}$ and  $\mathcal S_{\text{amb-seen}}$, by using the ensemble---which has already been trained on  $\mathcal S_\text{risky}$---and by learning any new behavior or default policy when encountering novel stimuli in $\mathcal S_{\text{amb-seen}}$. We describe the exact procedure in Algorithm~\ref{alg:ambiguity}.

The neural network architecture that we use for our Q-learners (even the meta-policy) is a recurrent neural network with a MLP/CNN (depending on the nature of the input) torso, an LSTM layer~\cite{hochreiter1997long} and an MLP head. We use gradient normalization to avoid training instabilities. 

\subsection{Experiments and Environment Details}
Below we describe four experiments on decision-making scenarios involving urns (see Figure~\ref{fig:urn-choices}) and one experiment involving a grid-world environment.

\subsubsection{Urn Experiments}

In all of our urn experiments the agent needs to choose between two urns, i.e. $| \mathcal A | = 2$, each urn containing $M = 10$ colored marbles (see Figure~\ref{fig:risk_one_step}A for an example). After the agent's choice, a marble is randomly drawn from the chosen urn and a color-dependent reward is given.  White marbles give no reward ($r=0$), whereas red and green marbles give negative ($r=-1$) and positive ($r=+1$) rewards, respectively. Let  the tuple $c_i = (M_w, M_g, M_r)$ be the \emph{urn configuration} of urn $i$ denoting the number of white ($M_w$), blue ($M_g$) and red ($M_r$) marbles. Clearly  $\sum_i M_i = M$. We use this notation also in Table~\ref{tab:experiments} where we summarize our experiments involving urns.

\begin{table}[t]
\scriptsize
\begin{tabular}{p{1.4cm}p{1.7cm}p{5.8cm}p{5.8cm}} 
 \textbf{Uncertainty} &
 \textbf{Probabilities} &
 \textbf{Left urn configurations} (certain) &
 \textbf{Right urn configurations} (stochastic) \\ \toprule
Risk \newline (Alg. 1)  &
 Described \newline (one-step) &
   \textbf{Training:} all $M$ marbles are either \white{W}, \green{G} or \red{R}  e.g., (\white{0}, \green{10}, \red{0}). \newline
   \textbf{Testing:} Always white marbles i.e., (\white{10}, \green{0}, \red{0}) &
  \textbf{Training:} uniformly random samples \white{W}, \red{R} and \green{G}, e.g., (\white{5}, \green{1}, \red{4}) \newline \textbf{Testing:} Same as training. \\
  \midrule
Risk \newline (Alg. 1) & 
  Experiential (sequential) &
  Same as above and urn configuration fixed during an episode. &
  Same as above and urn configuration fixed during an episode. \\
  \midrule
Ambiguity \newline (Alg. 2) & 
  Described (one-step)  &
   \textbf{Ensemble Training:} all $M$ marbles are either \white{W}, \green{G} or \red{R}, never \blue{B} or \yellow{Y} e.g., (\white{0}, \green{0}, \red{10}, \blue{0}, \yellow{0}) \newline
   \textbf{Meta-Policy Training:} all $M$ marbles are either \white{W}, \green{G}, \red{R} or \blue{B}, never \yellow{Y} e.g., (\white{0}, \green{0}, \red{10}, \blue{0}, \yellow{0}). \newline
   \textbf{Full Architecture Testing:} Always white marbles i.e., (\white{10}, \green{0}, \red{0}, \blue{0}) &
   \textbf{Ensemble Training:} uniformly random samples \white{W}, \green{G} or \red{R}, never \blue{B} or \yellow{Y} e.g., (\white{3}, \green{4}, \red{3}, \blue{0}, \yellow{0}) \newline
   \textbf{Meta-Policy Training:} uniformly random samples \white{W}, \green{G}, \red{R} or \blue{B}, never \yellow{Y} e.g., (\white{2}, \green{2}, \red{2}, \blue{4}, \yellow{0}). \newline
   \textbf{Full Architecture Testing:} uniformly random samples excluding \blue{B} and \white{W} i.e., (\white{0}, \green{4}, \red{2}, \blue{0}, \yellow{4})  \\ \midrule
Ambiguity \newline (Alg. 2) &
    Experiential  (sequential) &
    Same as above and urn configuration fixed during an episode. &     Same as above and urn configuration fixed during an episode.\\ \bottomrule 
\end{tabular}
\caption{\textbf{Urn configurations used for training and testing in the risk and ambiguity-sensitivity experiments.} The first column denotes the type of uncertainty and the algorithm we used for training. The second column denotes how the probabilities are communicated to the agent i.e., fully description (one-step decision-making problem) or experienced by exploration (sequential decision-making problem). Third and fourth columns describe the urn configurations with the following labeling: white \white{W}, green \green{G}, red \red{R}, yellow \yellow{Y} and blue \blue{B} marbles. Note that the left urn is certain in the sense that all marbles are always of the same color, noting that the color could be in itself unknown. } \label{tab:experiments}
\end{table}
\normalsize

\paragraph{Experiments involving risk.} The aim of our experiments on risk is to explore how an agent trained with Algorithm~\ref{alg:risk} is  sensitive to reward variability. For this reason, during training we expose the agent to situations where the urn on the left, tagged \emph{certain}, carries deterministic rewards (all marbles are the same) once the content is known. In contrast, the urn on the right, tagged \emph{risky}, gives stochastic rewards since it contains marbles with different colors. During testing, we fix the urn on the left to be always full of white marbles, serving as a good neutral baseline, and the urn on the right still being random. Agents trained under Algorithm~\ref{alg:risk} should be sensitive to the variability of the stochastic urn. See first row in Table~\ref{tab:experiments}.

\paragraph{Experiments involving ambiguity.} The aim  of our experiments on ambiguity is to show how the resulting agent reacts to unknown situations when trained using Algorithm~\ref{alg:ambiguity}. Different stimuli types arise by using  different marble colors: stimuli in $\mathcal S_\text{risky}$ arise from urns with red, white and green marbles. Stimuli in $\mathcal S_\text{amb-seen}$ arise by adding blue marbles to the mix whose reward values can be varied to induce different types of ambiguity sensitivity. Finally, stimuli in $\mathcal S_\text{amb-novel}$ are used in our testing scenario, where we remove the blue marbles and add yellow marbles. The latter have never been seen by the agent during training, not even the by meta-policy. See the third row in Table~\ref{tab:experiments} and the top row in Figure~\ref{fig:ambiguity_one_step}.

\paragraph{Decisions from description and from experience.} Researchers in behavioral economics commonly give  to human subjects all the necessary information to compute the probabilities of events\cite{kudryavtsev2012description,hertwig2004decisions}. Hence, these types of decision-making scenario are commonly referred to as \emph{decisions from description} and do not require exploration. This is in contrast to \emph{decisions from experience} in which full information is not given and exploration is required. This distinction is useful to highlight the effects of risk and ambiguity in the context of exploration or no exploration. 

\paragraph{Experiments with described probabilities} In a subset of our experiments tagged as `described probabilities', we also give all the necessary information to our meta-trained agents.  We do so by including in the agent's observation  a matrix $A_i \in \mathbb R^{M \times j}$  describing the contents of the $i$-urn, i.e. the color of each marble, where there are $M$ rows (one per marble) and each row is a $j$-dimensional one-hot vector that encodes the color. This equips the agent with  all necessary information to choose the optimal action in the first time-step. Subsequent trials would only force the agent to make the same decision again, hence, it is enough that our experiments on described probabilities only expose the agent to a single decision step. 

\begin{figure}[t]
    \centering
    \includegraphics[width=\textwidth]{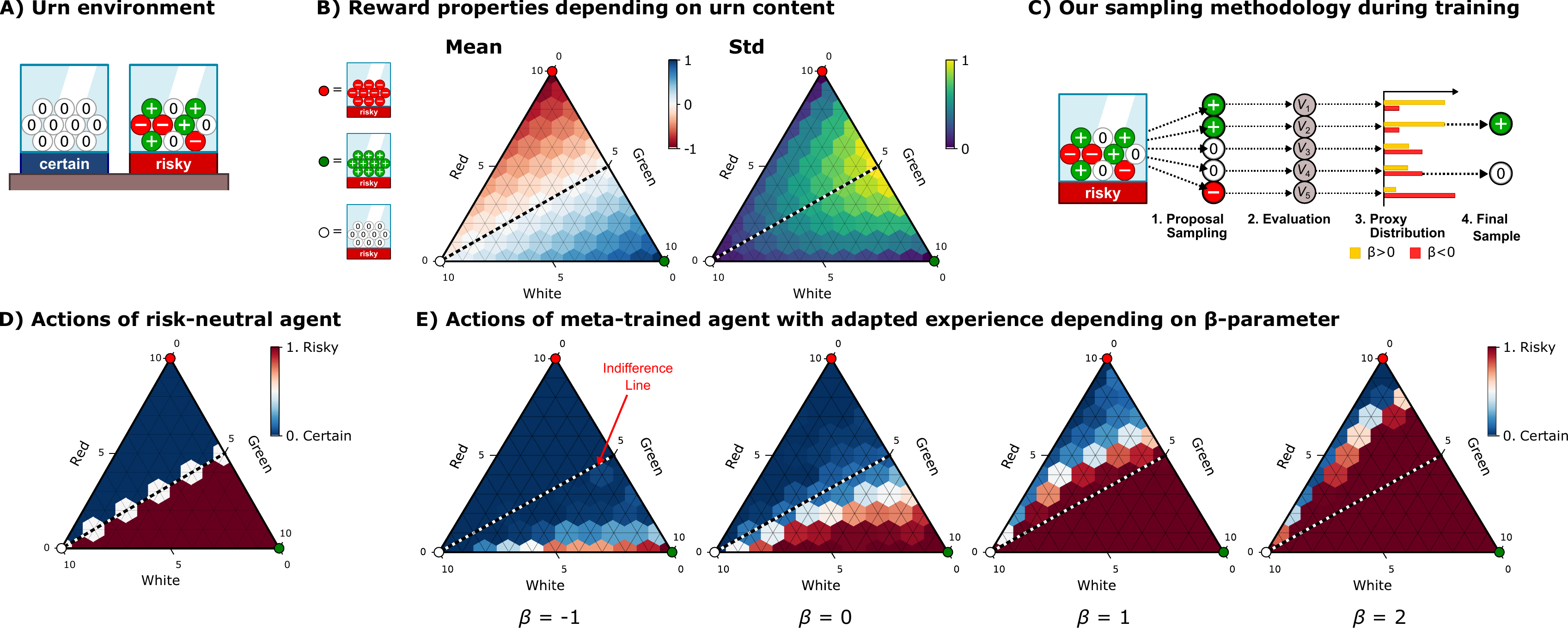}
    \caption{\textbf{Experiment 1: Risk-sensitivity with described probabilities.} \textbf{A)} Environment illustration, see Table~\ref{tab:experiments} for more details on the urn configurations. \textbf{B)} Mean and standard deviation of the reward of all possible urn configurations (see Section~\ref{sec:experimental-methodology} and~\ref{sec:experimental-results} for more details). \textbf{C)} Illustration of our approximation to Equation~\eqref{eq:rho}, see Section~\ref{sec:summary_methodology_risk} for more details. \textbf{D)} Choice behavior of an ideal risk-neutral agent when choosing between an baseline  urn (the certain urn in blue) with $0$ expected reward and an urn with configuration depicted in the triangle (tagged as risky in red). When both have the same expected value the agent is indifferent (white color), whereas it chooses the urn with highest expected reward otherwise. \textbf{E)} Choice behavior of our trained agents with different $\beta$ conditions using Algorithm~\ref{alg:risk} and environments in Table~\ref{tab:experiments}. As we see, $\beta$ controls the agent's risk-sensitivity. For example, for $\beta=-1$ the agent chooses most of the time the certain urn as denoted by the blue area. Analogous reasoning follows for the other $\beta$-conditions.}
    \label{fig:risk_one_step}
\end{figure}

\paragraph{Experiments with experiential probabilities.} Similar to multi-armed bandits, we consider the case where the agent needs to explore to discover the contents of the urns. Thus, full urn contents are not shown as in the described probabilities scenario, but instead we only show  the information of the sampled marble (a one-hot vector denoting the color) from the chosen urn at each time step.  Since exploration is required, we expose the agent to multiple time steps $H=20$. 

In summary (see Table~\ref{tab:experiments}), we conducted the following 4 experiments:
\begin{itemize}
    \item \textbf{Experiment 1}: Risk sensitivity  with described probabilities.
    \item \textbf{Experiment 2}:  Risk sensitivity  with experiential probabilities.
    \item \textbf{Experiment 3}: Ambiguity sensitivity with described probabilities.
    \item \textbf{Experiment 4}: Ambiguity sensitivity with experiential probabilities.
\end{itemize}

\subsubsection{Gridworld Experiments}
The aim of the grid-world experiments is to provide further evidence that our proposed methodology on ambiguity-sensitivity scales to more complex environments with bigger state spaces.
The left panel of Figure~\ref{fig:ambiguity_grid_world} depicts an example grid-world. The goal of the agent is to pick up the marbles that give positive reward (green) while avoiding negative reward (red). At the start of the episode there are $M=20$ marbles with randomly selected colors and locations in the room (all equally likely).
This is a similar setting to the urn experiments, but here the agent also needs to move around with four actions $|\mathcal A| = 4$ (up, down, left, right).

The agent has an egocentric view and can see up to two tiles in all directions. The resulting observation includes 25 tiles in total and is presented as a tensor of shape $(l, 5, 5)$, where the leading dimension is a one-hot encoding of the $l$ different tile types and the second and third dimensions are the $x$- and $y$-directions. For more information on the details of the environment see~\cite{pycolab}.

\paragraph{Training conditions.} The full agent is trained according to Algorithm~\ref{alg:ambiguity}. In particular, the ensemble is trained using only green ($+1$ reward) and red ($-1$ reward) marbles in analogy with the urn experiments (see third row in Table~\ref{tab:experiments}). The top-level agent can observe the raw ensemble $Q$-values (meaning $f$ is the identity function, see Section~\ref{sec:methodology_ambiguity}) and it is trained on blue marbles as well, which we set to have either positive ($+1$) or negative ($-1$) reward. As we show in the Results section, this induces ambiguity-seeking or ambiguity-averse behavior respectively. 

\paragraph{Testing conditions.}
At test time, we introduce a new marble (gray), which neither the ensemble nor the top-level agent saw during training.

\section{Experimental Results}\label{sec:experimental-results}

Before going into the results, we explain how to read the triangle plots depicting the agent's decision-making choices a wide range of urn configurations. 

\paragraph{Reading the triangle plots.} We use Figure~\ref{fig:risk_one_step}B as an example. In this figure  we show the reward's mean and standard deviation for all possible urn configurations of the risky urn. Focusing on the first triangle, its corners denote deterministic urns with $M=10$ red (top corner), white (bottom-left) and green (bottom-right) marbles. Urns with mixed contents lie in the interior and on the sides of the triangle. The axes' direction can be determined by the small indent near the number $5$. For example, the horizontal indent indicates that all possible urns with the same number of red marbles lie horizontally. The red-blue color code denotes the value of the mean reward for that urn configuration. 
We depict in a dashed line the urn configurations with zero expected reward having equal number of red and green marbles. The second triangle shows the reward's standard deviation. Its highest value (in bright yellow) corresponds to the urn with $5$ red and $5$ green marbles.  

\paragraph{Evaluation procedure in urn experiments.} In all of our four urn experiments we train 5 agents per experiment with different random seeds---according to Algorithms~\ref{alg:risk} and \ref{alg:ambiguity} and Table~\ref{tab:experiments}. Next, we test each of them \emph{on all possible combinations of marble colors}  allowed by the test conditions outlined in the table. Note that the agent may  receive as an input the tensor $A_i$ containing the information about the urn contents (see Experiments with described probabilities). Then, by construction there exist multiple permutations (row orderings) that describe the same urn configuration. These permutations can generate small variations in choice behavior (since our neural networks are sensitive to the ordering), which we diminish by sampling \emph{for each combination} $G=100$ permutations (row orderings) and taking the average. 
So, in summary, each dot in a triangle plot is the average of $5$ agents and $100$ permutations per agent. Since there are $66$ urn configurations, our agents are exposed to $33000$ decision-making situations per triangle plot.

\begin{figure*}[t!]
    \centering
    \includegraphics[width=\textwidth]{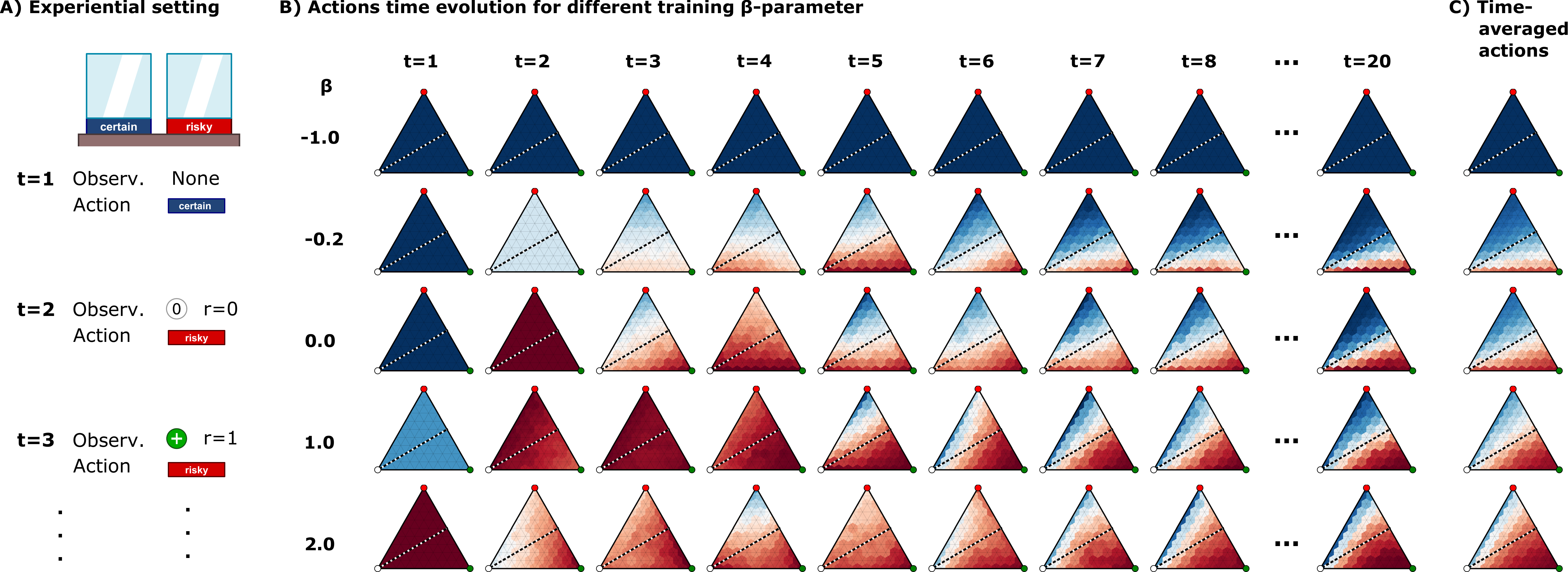}
    \caption{\textbf{Experiment 2: Risk-sensitivity with experiential probabilities.} \textbf{A)} Illustration of the sequential setting. At $t=1$ the agent receives an empty observation denoting zero knowledge about the urn contents. As time advances and choices are made the agent observes the sampled marbles from the selected urn. \textbf{B)} Choice behavior for the agents trained with Algorithm~\ref{alg:risk} on conditions of Table~\ref{tab:experiments}. Rows denote different agents trained with different $\beta$-conditions and columns denote the choice behavior at each time-step. As can be seen higher $\beta$ generates risk-seeking behavior that chooses more often the risky urn (in red) whereas negative $\beta$ generates risk-averse behavior by choosing more often the certain urn (in blue). \textbf{C)} Time-averaged behavior.}
    \label{fig:risk_multi_step}
\end{figure*}

\paragraph{Evaluation procedure in grid-world experiments.} To evaluate ambiguity-seeking and ambiguity averse agents in the grid world environments, we introduce a novel marble (gray) and monitor how many gray marbles the agent picks up in an 80-step episode. Since there are $64$ accessible tiles in the grid world, $80$ steps are enough for the agent to pick up all the marbles it wants to pick up for most initial marble configurations. If the agent picks up most of the gray marbles, this indicates ambiguity-seeking behavior, while avoiding them suggests ambiguity-averse behavior. As baselines for comparison, we track the number of known  marbles with positive reward (green) and negative reward (red) picked up by the agent. To make the results robust, we repeat the analysis for $200$ different random initial marble configurations.

\subsection{Results Experiment 1: Risk sensitivity with described probabilities}

\paragraph{Risk-neutral baseline.} Figure~\ref{fig:risk_one_step}D shows the choice behavior of an expected utility maximizer in the testing environment.   The choice behavior for each urn composition is specified with a single scalar from $0$ (blue)---i.e. $100\%$ chance of choosing the certain urn with white marbles---to $1$ (red)---i.e. $100\%$ chance of choosing the risky urn with composition specified in the triangle plot. 
The agent chooses the risky one with probability $1$, when the risky urn contains  a greater number of green marbles than red marbles. The agent is indifferent between both urns,     when both urns have the same expected utility (shaded line). This behavior is in line with an expected-utility maximizer and is the baseline that we use to compare with the other risk-sensitive agents.

\paragraph{Risk-sensitive results.} Figure~\ref{fig:risk_one_step}E shows the average choice behavior of agents trained with the methodology above for different $\beta$ values. As can be seen $\beta=-1$ generates a risk-averse agent that chooses  the certain urn most of the time. As $\beta$ increases the average behavior becomes more and more risk-seeking. This more acute for urn configurations with high reward variability. 

\begin{rem}
Note how, for $\beta=0$, we obtain an agent exhibiting close to risk-neutral behavior but not fully. We have the hypothesis that this effect is due to training with stochastic gradients or due to the overestimation problem in RL~\cite{leibfried2017information}. The study of this effect is out of the scope of this work. 
\end{rem}

\subsection{Results Experiment 2: Risk sensitivity with experiential probabilities}

Figure~\ref{fig:risk_multi_step}B shows the average choice behavior in different $\beta$ conditions for different time-steps. As can be seen, negative $\beta$ values induce risk-averse behavior and, as we increase $\beta$, the agent becomes more risk-seeking. Looking at the time direction, we can see the uniform color in all configurations of early time-steps, suggesting that the agent is fairly insensitive to the initial observations. For example, for $\beta \leq 0$ the agent always chooses the certain urn at $t=0$. This is a smart strategy because the left urn is always certain, thus knowing its value early on is useful for deciding whether to stick to that urn (as in $\beta = -1.0$). After $t=8$ the average choice behavior is fairly stable until the end of the sequence at $t=20$.  Figure~\ref{fig:risk_multi_step}C shows the time-averaged choice behavior obtained from averaging all decisions from all time steps. This is just a compressed version of all time-step data. Here we can also observe how risk-sensitivity evolves depending on the $\beta$ values.

\begin{figure*}[t]
\centering
\includegraphics[width=\textwidth]{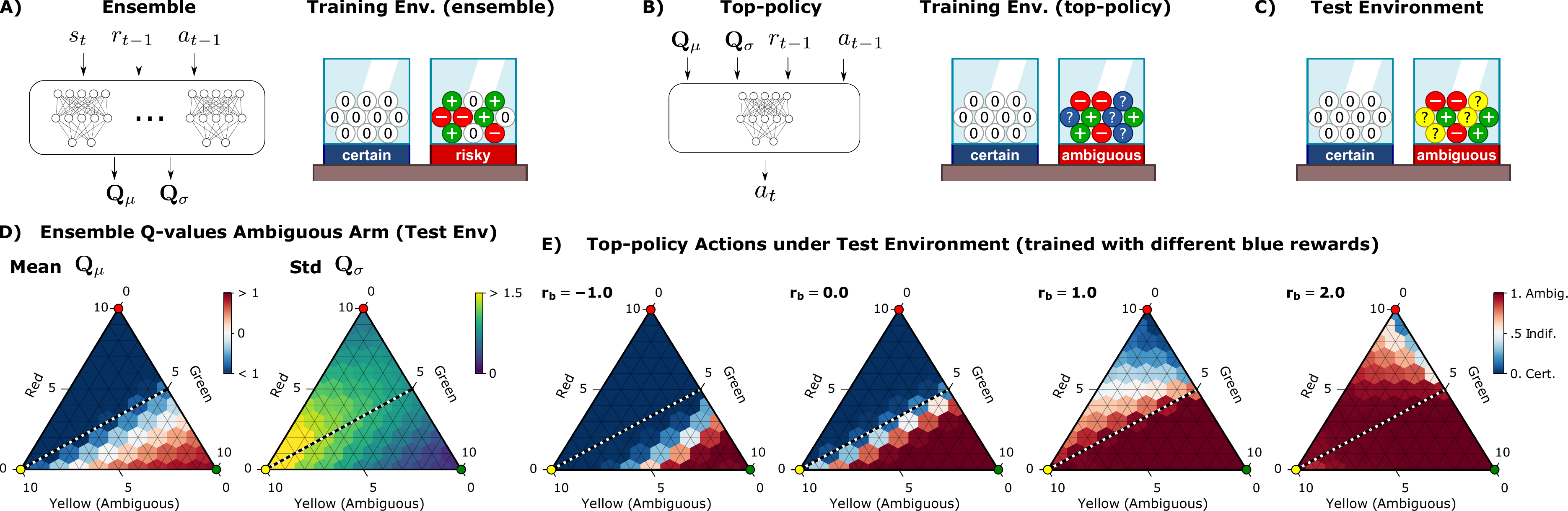}
\caption{\textbf{Experiment 3: Ambiguity-sensitivity with described probabilities.} \textbf{A)} Illustration of the ensemble and the training environment used to train it.  \textbf{B)} Illustration of the top-policy (or meta-policy) and the training environment using novel marbles. \textbf{C)} Illustration of the testing condition with marbles in yellow never seen before by neither the ensemble nor the top-policy. \textbf{D)} Mean and standard deviation of the ensemble's output when exposed to novel yellow marbles (test condition). As can be seen the ensemble is able to capture the mean in a reasonable way (the more green marbles the higher the mean Q-values). Importantly, as shown in the second triangle  the variance increases as the number of yellow marbles increases. \textbf{E)} Choice behavior of the combined agent (meta-policy + ensemble). We see that when the top-policy has been exposed to negative reward for blue marbles it behaves in an ambiguity-averse manner whereas the contrary is observed for positive-reward blue marbles. }
\label{fig:ambiguity_one_step}
\end{figure*}

\subsection{Results Experiment 3: Ambiguity sensitivity with described probabilities}

Figure~\ref{fig:ambiguity_one_step}D shows the mean values of $\mathbf Q_\mu$ (left plot) and $\mathbf Q_\sigma$ (right plot) for an ensemble of $K=20$ agents trained under Table~\ref{tab:experiments} conditions. We expose the ensemble to different test configurations involving yellow marbles---never seen before---as measured by the axis on the bottom of the triangle. As can be seen, growing number of yellow marbles implies a growing standard deviation over Q-values provided by the ensemble. This shows that the ensemble is able to detect novel situations in a continuous fashion. Additionally, the left plot shows how the mean Q-values can correctly recognize situations with high number of red or green marbles by assigning high or low mean values, respectively.

Figure~\ref{fig:ambiguity_one_step}E shows the average choice behavior of the combined ensemble and meta-policy for different rewards associated with the blue marbles. In particular, we trained the meta-policy with blue-marble rewards $r_b = \{-1.0, 0.0, 1.0, 2.0 \}$, which are effectively the rewards of stimuli $\mathcal{S}_\text{amb-seen}$ from previous sections. As can be seen,  for negative blue rewards the agent is ambiguity-averse, since it chooses the certain urn more often for increasing amount of yellow marbles. The opposite effect is observed for positive blue rewards: the agent chooses the ambiguous urn more often as the amount of yellow marbles increases. This type of behavior clearly corresponds to ambiguity-averse and ambiguity-seeking agents respectively.

\begin{figure*}
\centering
\includegraphics[width=\textwidth]{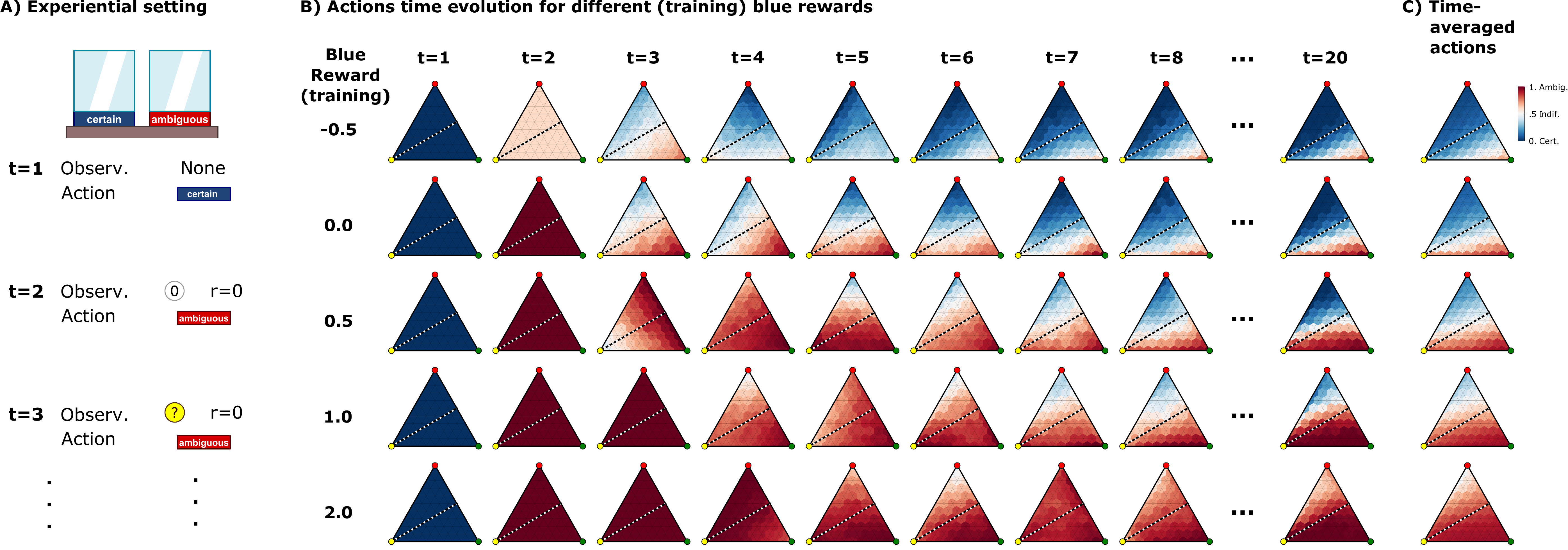}
\caption{\textbf{Experiment 4: Ambiguity-sensitivity with experiential probabilities.} \textbf{A)} Illustration of the sequential setting. At $t=1$ the agent receives an empty observation denoting zero knowledge about the urn contents. As time advances and choices are made the agent observes the sampled marbles from the selected urn. \textbf{B)} Choice behavior for the agents trained with Algorithm~\ref{alg:ambiguity} on conditions of Table~\ref{tab:experiments}. Rows denote different agents trained with different rewards for blue marbles and columns denote the choice behavior at each time-step. As can be seen higher reward for blue marbles generates ambiguity-seeking behavior that chooses more often the ambiguous urn (in red) whereas negative rewards for blue marbles generates ambiguity-averse behavior by choosing more often the certain urn (in blue). \textbf{C)} Time-averaged behavior.}
\label{fig:ambiguity_multi_step}
\end{figure*}

\subsection{Results Experiment 4: Ambiguity sensitivity with experiential probabilities}

Figure~\ref{fig:ambiguity_multi_step}A shows how the experiential setting looks like (see Table~\ref{tab:experiments} for more information). This is very similar to Figure~\ref{fig:risk_multi_step}A, but now the agent can also observe yellow marbles never seen before.

In Figure~\ref{fig:ambiguity_multi_step}B we show the choice behavior for various blue-marble reward training conditions along time-steps. At the first time-step the agent always chooses the certain non-ambiguous urn. This is a similar strategy adopted by the agents at Experiment 2 in Figure~\ref{fig:risk_multi_step}. Similarly here, it is a smart strategy to  know the contents of such urn early on. 
In the next time-steps the differently trained agents behave differently. When the top-policy is trained with negative blue-marble rewards (first row) the agent learns to be ambiguity-averse when observing novel yellow marbles. This can be checked by seeing that most of the choices correspond to selecting the ambiguous urn. Such choices are denoted in blue inside the triangle plot.
When we increase the blue reward during the training condition (see Table~\ref{tab:experiments}), the agent becomes more ambiguity-seeking,  which is denoted by the fact that it chooses the ambiguous urn (in red color code) more often.
Figure~\ref{fig:ambiguity_multi_step}C shows the time-average of the average choice behavior which depicts the same conclusions outlined above since it is just a compressed version of the all the time-step data.

\subsection{Results Experiment 5: Ambiguity sensitivity in a grid-world}
Figure~\ref{fig:ambiguity_grid_world} shows bar plots of the average fraction of green, red, and gray marbles (never seen before) picked up during 80-step episodes starting from $200$ different initial marble configurations. For the top three bars, the top-level policy was trained on blue marbles with positive reward, for the bottom three bars the blue marbles had negative reward. As can be seen, the agent on the top picks up more of the novel gray marbles, thus exhibiting ambiguity-seeking behavior, while still avoiding red marbles and picking up green marbles. The agent on the bottom picks up fewer of the gray marbles indicating that it is ambiguity-averse. Furthermore, it exhibits slightly worse performance on the green marbles as avoiding both red and gray marbles makes it harder to collect all the green ones.

The top-level agent detects ambiguity not only using the variance of the ensemble $Q$-values, but also the mean. If the mean deviates from values observed during training, this is taken as a sign of ambiguity by the top-level policy.

\begin{figure*}
\centering
\raisebox{0.2\height}{\includegraphics[width=0.3\textwidth]{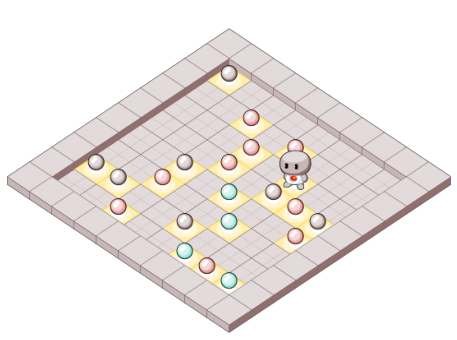}}
\hspace{0.02\textwidth}
\includegraphics[width=0.6\textwidth]{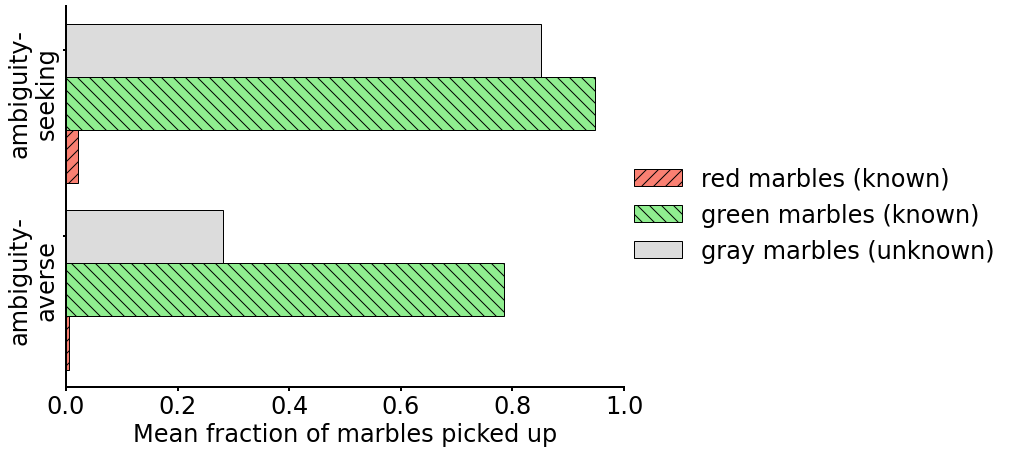}
\caption{\textit{Left:} Grid world environment for testing ambiguity sensitivity. Green (positive reward) and red (negative reward) marbles were encountered during training, gray marbles are introduced only at test time. \textit{Right}: Mean fraction of marbles picked up by agents with positive (top bars) and negative (bottom bars) experience of ambiguity during training. The mean was calculated over episodes with $200$ different random initial marble configurations.}
\label{fig:ambiguity_grid_world}
\end{figure*}

\section{Discussion}\label{sec:discussion}
In this paper, we have proposed two modifications to the meta-learning protocol that produce risk- and ambiguity-sensitive behavior. For risk-sensitivity to emerge, the environment must adapt to the value estimations of the agent in a friendly or adversarial way. For ambiguity-sensitivity to emerge, the agent needs a way to detect novelty for example using an ensemble as proposed in \cite{lakshminarayanan2017simple}. We have shown that given this information, a second meta-policy can be trained to exhibit ambiguity-sensitivity based on its experience in novel situations. We have demonstrated how to practically train and test such agents' uncertainty-sensitivity in urn and grid world experiments. 

As mentioned in Section \ref{sec:basic-risk-sensitive}, an agent trained with our risk-sensitive training protocol can be interpreted as being embodied, that is, contained within the environment it is interacting with. This is because the transition dynamics of the environment now depend on the agents behavior, specifically they depend on it's (estimated) Q-values. Intuition suggests that an agent that is embodied would be more sensitive to risk than an agent that is not. This is because an agent that is embodied in an environment can be affected (and more importantly altered) directly by the environment, while an agent that is separate from the environment cannot be (directly) altered by the environment (or itself).

What is the rational thing to do in absence of full information? The point raised in~\cite{gilboa2009always} highlights that there might not be a perfectly rational decision under ambiguity. The issue comes from the tension between having to make a decision and the inability to justify such decisions under the absence of data, logic or sufficient scientific knowledge. This is resolved in two ways. Either the decision-maker uses default policies when such situations occur or it resorts to incorporating caution (i.e. maxmin rules) into the decision procedure. Our architecture can handle both cases since the meta-policy is able to either learn a default policy, or to incorporate worst-case assumptions e.g. as in the case of negative reward for the blue-marbles. 

But what is ambiguity really? We hypothesize its nature might be closely related to the concepts of open and closed worlds. In closed worlds, the assumption is that all relevant hypothesis (e.g. all causal models) are taken into account in the Bayesian framework. Since the agent can place priors over the hypothesis space, it can also modify its decision-making processes to take into account such uncertainties in a risk-sensitive fashion. In contrast, the open-world assumption contemplates the possibility of a non-complete hypothesis space  (e.g. that there are missing mechanisms in the Bayesian model). Thus, in open worlds, the absence of evidence about the truth of a particular statement doesn't directly mean that the statement is false as in closed-worlds, but that it is simply not known whether it is false or true. This is where the distinction between both can hint to the true concept of "I don't know" without relying on an infinite regression of uncertain priors. In summary, the agent can think about a finite amount of finitely complex world-models. To be aware that this class of models might not be enough to describe the situation at hand, particularly in novel situations, is to be aware of ambiguity. However, as more data is available, these missing mechanisms can potentially be learned, consequently reducing ambiguity to risk.

\section{Conclusion}
Currently, most machine learning models cannot distinguish between what they know and what they don't know. However, this ability is crucial for robust systems that can deal with our highly uncertain and dynamic world.  While robustness is traditionally linked with risk-sensitivity it is also deeply connected to ambiguity. We have shown how to deal with risk and ambiguity in a data-dependent manner with our meta-training mechanisms.
We hope these mechanisms are a starting point of a plethora of data-dependent methods for the study and application of uncertainty-sensitivity in humans and  machines.

\bibliographystyle{unsrt}
\bibliography{main}

\section*{Appendix}

\appendix

\section{Risk-sensitivity on a single-step decision-making problem}\label{appendix:risk-theory-single-step}

\textbf{Setting:} Let us assume that there is a single state $s$, so that our analysis is simpler.  There are actions $a \in \mathcal A$ and observations $o \in \mathcal O$. Let the agent implement a \emph{fixed} distribution over actions $a \sim \pi(\cdot)$, and let the observations depend on the agent's action and be stochastic via the distribution $o \sim P_{O}(\cdot |a)$. Given an observation $o$ the reward is also stochastic with $r \sim P_{R}(\cdot |o)$. The average reward is defined as
\begin{equation}
  V(o):= \sum_r rP_{R}(r|o).
\end{equation}

The learning procedure goes as follows. At iteration $k$, an action is sampled $a_k \sim \pi$, then an observation $o_k \sim P_O(\cdot |a_k)$ and a reward $r_k \sim P_R(\cdot | o_k)$. We gather all the data up to iteration $k$ with the ordered set $\mathcal D^k := \{ (a_i, o_i, r_i) \}_{i=1}^k$.
Now, instead of TD-learning (applicable to the multi-step case), assume that the agent learns  an estimate of $V(o)$ at iteration $k$ via plain Monte Carlo (MC), 
\begin{equation}
   \hat V_k(o) := \frac{1}{|\mathcal I(o, k)|} \sum_{i \in \mathcal I(o, k)} r_i,
\end{equation}
where $\mathcal I(o, k)$ is the set of all indices that satisfy $o == o_i$ in $[\mathcal D^k]_i$. That is, all the rewards that were obtained from observation $o$ up to iteration $k$.

Now we have two possibilities:
\begin{itemize}
    \item  First, the environment could be fixed to $P_O(\cdot | a)$
    \item Second, we could employ a modification $\rho_O (\cdot |a, \hat V)$ that depends on the agent's value estimate.
\end{itemize}

\paragraph{First case:} Assume the first case where the environment is fixed. In this case, the  agent learns an estimate of the value of action  $a$ at iteration $k$ via
\begin{equation}
  \hat Q_k(a)  := \sum_o P_O(o|a) \hat V_k(o).
\end{equation}
Note we could also do the MC approximation of $\hat Q$ but we decide not to do so now for the sake of simplicity. If we take the limit
\begin{equation}
    \lim_{k \rightarrow \infty} \hat Q_k (a) = \sum_o P_O(o|a) V(o)
\end{equation}
we can see that the agent would converge to a risk-neutral valuation.

\paragraph{Second case:} Now assume the second case where the environment adapts to the current value of the agent in the following way
\begin{equation}
    \hat Q_k (a):= \sum_o \rho (o |a, \hat V_k) \hat V_k(o)
\end{equation}
where 
\begin{equation}
   \rho (o|a, \hat V_k) :=\frac{1}{Z} P_O(o |a ) e^{\beta \hat V_k(o)}.
\end{equation}
Using these quantities we have the following proposition.

\newtheorem*{proposition1}{Proposition 1}

\begin{proposition1}
Let $\hat V_k(o)$ be the current valuation that the agent assigns to $o$. Further assume that the environment is modified to be $\rho(o|a,\hat V_k)$ from Equation~\eqref{eq:simple_modified_transition}. Then the Q-values from Equation~\eqref{eq:value-risk-simple}  are risk-sensitive in the sense that they are a function of the expectation and the variance of $V_k$ under $T$ depending on $\beta$. That is
\begin{equation}
    \hat Q(a) \approx \mathbb E_{T(\cdot |a)}[ \hat V(o) ] +  \beta \mathbb{VAR}_{T(\cdot |a)} [\hat V(o)].
\end{equation}
\end{proposition1}

\begin{proof}
The first order Taylor approximation of $\hat Q_k$ at $\beta = 0$ is 
\begin{equation}
    \hat Q_k(a) \approx \mathbb E_{P_O(\cdot |a)}[ \hat V_k(o) ] + \beta  \frac{\partial \hat Q_k(a)}{ \partial \beta }\bigg\rvert_{\beta = 0} 
\end{equation}
where 
\begin{align}
  \frac{\partial \hat Q_k(a)}{\partial \beta} &= \frac{\partial}{\partial \beta} \sum_o \frac{1}{Z} P_O(o|a)e^{\beta \hat V_k(o)} \hat V_k(o) \\
   & = \sum_o  P_O(o|a) \hat V_k(o) \left[  \frac{1}{Z}e^{\beta \hat V_k(o)}  \hat V_k(o) - e^{\beta \hat V_k(o)} \frac{1}{Z^2} \frac{\partial Z}{\partial \beta}  \right]  \\
   & = \sum_o  P_O(o|a) \hat V_k(o) \left[  \frac{1}{Z}e^{\beta \hat V_k(o)}  \hat V_k(o) - e^{\beta \hat V_k(o)} \frac{1}{Z^2} \sum_o P_O(o|a) e^{\beta \hat V_k(o)}\hat V_k(o) \right]  \\
   & = \sum_o \rho(o|a, \hat V_k (o)) \hat V_k(o)^2 - \left( \sum_o \rho(o|a, \hat V_k (o)) \hat V_k(o) \right)^2 \\
   & = \mathbb{VAR}_\rho \left[ \hat V_k(o)\right]
\end{align}
Therefore,
\begin{equation}
    \hat Q_k(a) \approx \mathbb E_{P_O(\cdot |a)}[ \hat V_k(o) ] +  \beta \underbrace{\mathbb{VAR}_\rho [\hat V_k(o)]\rvert_{\beta = 0}}_{\mathbb{VAR}_{P_O(\cdot |a)} [\hat V_k(o)]}
\end{equation}

As can be seen the agent is risk-sensitive at each iteration. Of course, it will also happen in the limit $\lim_{k \rightarrow \infty} \hat Q_k$.
\end{proof}

\section{Risk-sensitivity theory for the sequential case}\label{appendix:risk_theory}
The aim of this section is to show that our methodology in Section~\ref{sec:summary_methodology_risk} generates risk-sensitive agents.

A Markov decision process (MDP) is defined as the tuple ($\mathcal S, \mathcal A, T, r, \gamma)$ where $\mathcal S$ is the state space, $\mathcal A$ is the action space, $T: \mathcal S \times \mathcal A \rightarrow P(\mathcal S)$ the transition function and $r: \mathcal S \times \mathcal A \rightarrow \mathbb R$ the reward function. 
For a particular state $s$ and policy $\pi$, the values are computed as 
\begin{align}
V^\pi(s) & \coloneqq \mathbb E \left[  \sum_{t=0}^\infty \gamma^t r(s_t, a_t) | s_0 = s\right].
\end{align}

Now we will show that the following definition of value generates the same transition dynamics as we use in our experiments. The key insight is to introduce a variational distribution $\psi$ (playing the role of $\rho$ in the main manuscript) and optimize it to maximize the rewards while being close to the standard dynamics $T$.   

\begin{definition}[Risk-sensitive value]
\begin{equation}\label{eq:risk-sensitive_value} 
    V^{\pi\psi} (s) := \mathbb E \bigg[ \lim_{H \rightarrow \infty } \sum_{t=0}^H \gamma^t \bigg( r(s_t, a_t)  - \frac{1}{\beta} \log \frac{\psi(s_{t+1} | s_t a_t) }{T(s_{t+1} | s_t a_t)} \bigg) \bigg| s_0 = s \bigg]
\end{equation}
where $\beta >0$ modulates the strength of the regularizer, and the expectation is over trajectories $\tau$ with $p(\tau |s_0) = \prod_t \pi(a_t|s_t)\psi(s_{t+1}|s_t,a_t)$. 
\end{definition}
The value function $\mathcal V^\pi (s) :=  \max_{\psi} V^{\pi \psi} (s) $  assigns value by penalizing variability on the return due to the stochasticity in the dynamics of the environment. Similar value functions have been adopted in~\cite{fei2020risksensitive}.

\begin{proposition}[Recursion]\label{prop:recursion}
The value function in \eqref{eq:risk-sensitive_value} satisfies the following recursion
\begin{equation}
    V^{\pi \psi} (s)  = \mathbb E_{a \sim \pi(\cdot | s)} \bigg[ r(s, a) +  \sum_{s'} \psi (s' | s, a) \bigg[  \gamma V^{\pi\psi}(s')  - 
    \frac{1}{\beta} \log \frac{\psi (s' |s, a)}{ T(s' |s, a)} \bigg]\bigg].
\end{equation}
\end{proposition}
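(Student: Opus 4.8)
The plan is a standard one-step unrolling of the infinite sum combined with the Markov (time-homogeneity) property of the trajectory distribution $p(\tau \mid s_0) = \prod_t \pi(a_t \mid s_t)\psi(s_{t+1}\mid s_t, a_t)$. Write the per-step cost as $c_t := r(s_t,a_t) - \frac{1}{\beta}\log\frac{\psi(s_{t+1}\mid s_t a_t)}{T(s_{t+1}\mid s_t a_t)}$, so that $V^{\pi\psi}(s) = \mathbb{E}\big[\lim_{H\to\infty}\sum_{t=0}^H \gamma^t c_t \,\big|\, s_0 = s\big]$. First I would peel off the $t=0$ term and use the tower rule, conditioning on $(a_0, s_1)$:
\begin{equation}
V^{\pi\psi}(s) = \mathbb{E}_{a_0\sim\pi(\cdot\mid s),\, s_1\sim\psi(\cdot\mid s,a_0)}\Big[ c_0 + \mathbb{E}\big[\textstyle\lim_{H\to\infty}\sum_{t=1}^H \gamma^t c_t \,\big|\, s_0=s, a_0, s_1\big]\Big].
\end{equation}
Here the first step's contribution is $c_0 = r(s,a_0) - \frac{1}{\beta}\log\frac{\psi(s_1\mid s,a_0)}{T(s_1\mid s,a_0)}$, which depends only on $(s,a_0,s_1)$ and hence passes outside the inner expectation.

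Next I would reindex the tail sum by $t' = t-1$ and pull out one factor of $\gamma$, giving $\sum_{t=1}^H \gamma^t c_t = \gamma \sum_{t'=0}^{H-1}\gamma^{t'} c_{t'+1}$. Because the process is Markov and the dynamics $\pi,\psi,T,r$ are time-homogeneous, the conditional law of the shifted trajectory $(s_1,a_1,s_2,\dots)$ given $(s_0,a_0,s_1)$ depends only on $s_1$ and coincides with the law of a fresh trajectory started at $s_0 = s_1$. Therefore
\begin{equation}
\mathbb{E}\big[\textstyle\lim_{H\to\infty}\sum_{t=1}^H \gamma^t c_t \,\big|\, s_0=s, a_0, s_1\big] = \gamma\, \mathbb{E}\big[\textstyle\lim_{H\to\infty}\sum_{t'=0}^{H-1}\gamma^{t'} c_{t'+1} \,\big|\, s_1\big] = \gamma\, V^{\pi\psi}(s_1).
\end{equation}
Substituting back and writing the $s_1$-expectation explicitly as $\sum_{s'}\psi(s'\mid s,a_0)(\cdot)$, then renaming $a_0 \to a$ and $s_1 \to s'$, yields exactly the claimed recursion.

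The main obstacle is the analytic bookkeeping around the infinite horizon: justifying that $\lim_{H\to\infty}\sum_{t=0}^H \gamma^t c_t$ exists (a.s.\ and in $L^1$) and that the limit may be interchanged with the expectation and split across the $t=0$/tail decomposition. This requires a mild regularity assumption --- boundedness of $r$ and of the per-step KL integrand $\log(\psi/T)$ (equivalently $\psi \ll T$ with bounded density ratio), so that $|c_t|$ is uniformly bounded and $\sum_t \gamma^t |c_t|$ is dominated by a convergent geometric series; dominated convergence and Fubini then legitimize every exchange above. I would state this integrability hypothesis explicitly (it is implicit in Definition of the risk-sensitive value) and otherwise treat the algebraic steps as routine. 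A secondary, purely notational point is to make the "shift" argument for the Markov property precise --- e.g.\ by noting that for every finite $H$ the identity $\mathbb{E}[\sum_{t=1}^H \gamma^t c_t \mid s_0,a_0,s_1] = \gamma\,\mathbb{E}[\sum_{t=0}^{H-1}\gamma^{t}c_{t}\mid s_0 = s_1]$ holds by the finite-horizon Markov property, and then pass $H\to\infty$ on both sides.
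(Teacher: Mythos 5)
Your proposal is correct and follows essentially the same route as the paper's proof: peel off the $t=0$ term, reindex the tail sum to extract a factor of $\gamma$, and invoke the Markov/time-homogeneity property to identify the conditional tail expectation with $\gamma V^{\pi\psi}(s')$. The only difference is that you explicitly flag the integrability and limit-interchange conditions (boundedness of $r$ and of $\log(\psi/T)$, dominated convergence), which the paper's derivation implicitly assumes.
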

\label{proof:recursion}
\begin{proof}
\begin{align*}
   V^{\pi\psi} (s) & = \mathbb E \bigg[ \lim_{H \rightarrow \infty } \gamma^0 \left( r(s_0, a_0) -  \frac{1}{\beta} \log \frac{\psi(s_{1} | s_0, a_0) }{T(s_{1} | s_0, a_0)} \right)  \\
   & \qquad + \sum_{t=1}^{H} \gamma^{t} \left( r(s_{t}, a_{t}) - \frac{1}{\beta} \log \frac{\psi(s_{t+1} | s_{t}, a_{t}) }{T(s_{t+1} |s_{t}, a_{t})} \right) \bigg| s_0 = s \bigg]. \\
   & = \mathbb E \bigg[  r(s_0, a_0) -  \frac{1}{\beta} \log \frac{\psi(s_{1} | s_0, a_0) }{T(s_{1} | s_0, a_0)}  \\
   & \qquad + \lim_{H \rightarrow \infty } \sum_{t=0}^{H-1} \gamma^{t+1} \left( r(s_{t+1}, a_{t+1}) - \frac{1}{\beta} \log \frac{\psi(s_{t+2} | s_{t+1}, a_{t+1}) }{T(s_{t+2} |s_{t+1}, a_{t+1})} \right) \bigg| s_0 = s\bigg]. \\
   & = \sum_a \pi(a | s) \sum_{s'} \psi (s' | s, a) \Bigg[  r(s, a) -  \frac{1}{\beta} \log \frac{\psi(s' | s, a) }{T(s' | s, a)}  \\
   & \qquad + \gamma  \mathbb E \bigg[ \lim_{H \rightarrow \infty } \sum_{t=0}^{H-1} \gamma^{t} \left( r(s_{t+1}, a_{t+1}) - \frac{1}{\beta} \log \frac{\psi(s_{t+2} | s_{t+1}, a_{t+1}) }{T(s_{t+2} |s_{t+1}, a_{t+1})} \right) \bigg| s_1 = s'\bigg] \Bigg]. \\
   & = \sum_a \pi(a | s) \sum_{s'} \psi (s' | s, a) \bigg[  r(s, a) -  \frac{1}{\beta} \log \frac{\psi(s' | s, a) }{T(s' | s, a)} + \gamma V^{\pi\psi}(s') \bigg]  \\
\end{align*}
\end{proof}

\begin{proposition}[Optimal Value and Argument] \label{prop:optimal_value} \label{prop:optimal_argument}
The optimal value function when maximizing over $\psi$ is
\begin{equation}
    \mathcal V^{\pi} (s):= \max_\psi V^{\pi \psi} (s)   = \mathbb E_{a \sim \pi(\cdot | s)} \bigg[ r(s, a) 
    + \frac{1}{\beta} \log \sum_{s'} T(s' | s, a) \exp \big( \gamma\beta \mathcal V^{\pi}(s') \big) \bigg]
\end{equation}
with optimal argument
\begin{equation}
\psi^* (s'| s, a) := \frac{T(s'|s,a) e^{\gamma \beta \mathcal V^\pi (s')}}{\sum_{\tilde s}T(\tilde s|s,a) e^{\gamma \beta \mathcal V^\pi (\tilde s)}}.
\end{equation}
\end{proposition}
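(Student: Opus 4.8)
The plan is to reduce the functional maximization over $\psi$ to a family of per-state-action convex problems by using the Bellman recursion of Proposition~\ref{prop:recursion}, and then invoke the Gibbs variational principle. The key elementary fact I would isolate first is the Donsker--Varadhan identity: for a fixed probability distribution $q$ on $\mathcal S$ and any bounded $g:\mathcal S\to\mathbb R$,
\[
 \max_{p}\Bigl\{\, \sum_{s'} p(s')\,g(s') \;-\; \sum_{s'} p(s')\log\tfrac{p(s')}{q(s')} \,\Bigr\} \;=\; \log\sum_{s'} q(s')\,e^{g(s')},
\]
the maximum being over probability distributions $p$ and attained uniquely at $p^*(s')\propto q(s')e^{g(s')}$. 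This is a one-liner: rewrite the bracket as $-\mathrm{KL}(p\,\|\,p^*) + \log\sum_{s'} q(s')e^{g(s')}$ and use nonnegativity of KL.

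Next I would introduce the operator $\mathcal T$ on bounded functions $W:\mathcal S\to\mathbb R$,
\[
 (\mathcal T W)(s) := \mathbb E_{a\sim\pi(\cdot|s)}\Bigl[\, r(s,a) + \tfrac{1}{\beta}\log\sum_{s'} T(s'|s,a)\,e^{\gamma\beta W(s')} \,\Bigr],
\]
and show it is a $\gamma$-contraction in the supremum norm: if $\|W_1-W_2\|_\infty\le\delta$ then, since $\beta>0$, $e^{\gamma\beta W_1(s')}\le e^{\gamma\beta\delta}e^{\gamma\beta W_2(s')}$ pointwise, whence $\tfrac1\beta\log\sum_{s'} T e^{\gamma\beta W_1}\le\gamma\delta+\tfrac1\beta\log\sum_{s'} T e^{\gamma\beta W_2}$, and symmetrically; monotonicity and the prefactor $\gamma<1$ give $\|\mathcal T W_1-\mathcal T W_2\|_\infty\le\gamma\|W_1-W_2\|_\infty$. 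By the Banach fixed point theorem $\mathcal T$ has a unique fixed point, which I define to be $\mathcal V^\pi$; this is precisely the claimed closed form for the optimal value.

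Then I would establish the two matching inequalities. For the upper bound, fix any $\psi$; Proposition~\ref{prop:recursion} writes $V^{\pi\psi}(s)$ as $\mathbb E_{a\sim\pi}\bigl[r(s,a)+\sum_{s'}\psi(s'|s,a)(\gamma V^{\pi\psi}(s')-\tfrac1\beta\log\tfrac{\psi}{T})\bigr]$, and applying the Donsker--Varadhan identity with $q=T(\cdot|s,a)$ and $g=\gamma\beta V^{\pi\psi}(\cdot)$ (then dividing by $\beta$) gives $V^{\pi\psi}(s)\le(\mathcal T V^{\pi\psi})(s)$ pointwise; since $\mathcal T$ is monotone and a contraction, iterating yields $V^{\pi\psi}\le\lim_n\mathcal T^n V^{\pi\psi}=\mathcal V^\pi$. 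For the lower bound, substitute the candidate $\psi^*(s'|s,a)=T(s'|s,a)e^{\gamma\beta\mathcal V^\pi(s')}/\sum_{\tilde s}T(\tilde s|s,a)e^{\gamma\beta\mathcal V^\pi(\tilde s)}$ into the recursion: one has $-\log(\psi^*/T)=-\gamma\beta\mathcal V^\pi(s')+\log\sum_{\tilde s}T e^{\gamma\beta\mathcal V^\pi(\tilde s)}$, the $\gamma\mathcal V^\pi(s')$ terms cancel, and the recursion reduces to $V^{\pi\psi^*}(s)=(\mathcal T\mathcal V^\pi)(s)=\mathcal V^\pi(s)$; uniqueness of the fixed point of $\mathcal T$ then identifies $V^{\pi\psi^*}$ with $\mathcal V^\pi$. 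Combining, $\sup_\psi V^{\pi\psi}=\mathcal V^\pi$ with the supremum attained at $\psi^*$, which is the assertion, including the stated formula for the optimal argument.

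The main obstacle I anticipate is the circular dependence of the objective on $\psi$: the variational distribution is simultaneously the optimization variable and an ingredient of $V^{\pi\psi}$ itself through the recursion, so one cannot legitimately "optimize inside the expectation" state by state. The contraction/fixed-point detour is exactly what breaks this circularity cleanly. The rest is routine analysis: boundedness of $r$ (to keep log-sum-exp and all series finite) and the justification of the $H\to\infty$ limit and interchange of limit and expectation in deriving the recursion, which we may take as already granted by Proposition~\ref{prop:recursion}.
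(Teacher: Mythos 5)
Your proof is correct but follows a genuinely different route from the paper. The paper's proof is a calculus-of-variations computation: it takes the functional derivative $\tfrac{\delta}{\delta\psi}V^{\pi\psi}(s)$, sets it to zero, solves for $\psi$, normalizes, and substitutes back into the recursion to obtain the log-sum-exp form. You instead isolate the Gibbs/Donsker--Varadhan identity $\max_p\{\sum_{s'}p\,g-\mathrm{KL}(p\|q)\}=\log\sum_{s'}q\,e^{g}$ and combine it with a Banach fixed-point argument for the soft Bellman operator $\mathcal T$. Your route buys several things the paper's does not: it certifies \emph{global} optimality rather than mere stationarity, it establishes existence and uniqueness of $\mathcal V^\pi$ as the fixed point of a $\gamma$-contraction, and it cleanly resolves the self-referential dependence of the objective on $\psi$ --- the very issue the paper's derivation glosses over when the term $\psi\,\tfrac{\delta}{\delta\psi}V^{\pi\psi}(s')$ silently drops out of the first-order condition. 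What the paper's approach buys in exchange is that the form of $\psi^*$ is \emph{derived} rather than guessed and then verified, and the argument is shorter. One step of yours is worded loosely: when you substitute $\psi^*$ into the recursion, the value term that appears is $\gamma V^{\pi\psi^*}(s')$, not $\gamma\mathcal V^\pi(s')$, so the advertised cancellation is not immediate; the correct patch is exactly the one you gesture at --- both $V^{\pi\psi^*}$ and $\mathcal V^\pi$ satisfy the same affine $\gamma$-contractive policy-evaluation recursion for the fixed kernel $\psi^*$, hence coincide by uniqueness of \emph{that} fixed point (not the fixed point of $\mathcal T$). With that clarification the argument is complete; note also that your upper bound uses $\beta>0$ to divide the Donsker--Varadhan inequality without flipping it, which matches the sign convention in the paper's definition of the regularized value.
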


\begin{proof}
We follow similar proof techniques from~\cite{tishby2011information,rubin2012trading,grau2016planning}. Starting with the equation for $V^{\pi \psi} (s)$
\begin{equation} 
V^{\pi \psi} (s)  = \mathbb E_{a \sim \pi(\cdot | s)} \bigg[ r(s, a) +  \sum_{s'} \psi (s' | s, a) \bigg[  \gamma V^{\pi\psi}(s')  - 
    \frac{1}{\beta} \log \frac{\psi (s' |s, a)}{ T(s' |s, a)} \bigg]\bigg].
\end{equation}
To find the maximum over $\psi$ we solve $\frac{\delta}{\delta \psi} V^{\pi\psi} (s)=0$ for $\psi$. First we need to find $\frac{\delta}{\delta \psi} V^{\pi\psi} (s)$.

\begin{align*}
	&\frac{\delta}{\delta \psi} V^{\pi\psi} (s)\\
	 &= 	\frac{\delta}{\delta \psi} \left[  \sum_a \pi(a | s)  \bigg[ r(s, a) +  \sum_{s'} \psi (s' | s, a) \bigg[  \gamma V^{\pi\psi}(s')  - 
    \frac{1}{\beta} \log \frac{\psi (s' |s, a)}{ T(s' |s, a)} \bigg]\bigg]\right] \\
     &\stackrel{(a)}{=} 	  \sum_a \pi(a | s)  \bigg[ \sum_{s'} \frac{\delta}{\delta \psi} \bigg[  \gamma \psi (s' | s, a) V^{\pi\psi}(s')  - 
    \frac{1}{\beta} \psi (s' | s, a) \log \frac{\psi (s' |s, a)}{ T(s' |s, a)} \bigg]\bigg] \\
    &\stackrel{(b)}{=} 	  \sum_a \pi(a | s)  \bigg[ \sum_{s'}  \bigg[  \gamma \left(  V^{\pi\psi}(s') + \psi (s' | s, a) \left(\frac{\delta}{\delta \psi}V^{\pi\psi}(s')\right) \right) - 
    \frac{1}{\beta} \left( \log \frac{\psi (s' |s, a)}{ T(s' |s, a)} + \psi (s' | s, a) \frac{1}{\psi (s' |s, a)} \right) \bigg]\bigg] \\
    &\stackrel{}{=} 	  \sum_a \pi(a | s)  \bigg[ \sum_{s'}  \bigg[  \gamma \left(  V^{\pi\psi}(s') + \psi (s' | s, a) \left(\frac{\delta}{\delta \psi}V^{\pi\psi}(s')\right) \right) - 
    \frac{1}{\beta} \left( \log \frac{\psi (s' |s, a)}{ T(s' |s, a)} + 1 \right) \bigg]\bigg]
\end{align*}
(a) comes from the derivative being linear and $r(s,a)$ having no dependence on $\psi$. (b) is the use of the product rule of differentiation.

Solving $\frac{\delta}{\delta \psi} V^{\pi\psi} (s)=0$ for $\psi$ we get
\begin{align*}
 \gamma \left(  V^{\pi\psi}(s') \right) &= 
    \frac{1}{\beta} \left( \log \frac{\psi (s' |s, a)}{ T(s' |s, a)} + 1 \right) \\
    \Rightarrow \log \frac{\psi (s' |s, a)}{ T(s' |s, a)}  &= \beta \gamma V^{\pi\psi}(s')  - 1 \\
    \Rightarrow \frac{\psi (s' |s, a)}{ T(s' |s, a)}  &= e^{\beta \gamma V^{\pi\psi}(s')  - 1} \\
    \Rightarrow \psi (s' |s, a) &= T(s' |s, a) e^{\beta \gamma V^{\pi\psi}(s')  - 1} \\
    &= \frac{T(s' |s, a) e^{\beta \gamma V^{\pi\psi}(s')}}{e}
\end{align*}
Normalizing we get
\[ \psi^* (s' |s, a) = \frac{\frac{T(s' |s, a) e^{\beta \gamma V^{\pi\psi}(s')}}{e}}{\sum_{\tilde s} \frac{T(\tilde s |s, a) e^{\beta \gamma V^{\pi\psi}(\tilde s)}}{e} 
} = \frac{T(s' |s, a) e^{\beta \gamma V^{\pi\psi}(s')}}{\sum_{\tilde s} T(\tilde s |s, a) e^{\beta \gamma V^{\pi\psi}(\tilde s)}} \]
Plugging this back into the equation we get
\begin{align*}
	V^{\pi } (s)  &\stackrel{}{=} \mathbb E_{a \sim \pi(\cdot | s)} \bigg[ r(s, a) +  \sum_{s'} \psi^* (s' | s, a) \bigg[  \gamma V^{\pi}(s')  - 
    \frac{1}{\beta} \log \frac{\psi^* (s' |s, a)}{ T(s' |s, a)} \bigg]\bigg] \\
    &\stackrel{(a)}{=} \mathbb E_{a \sim \pi(\cdot | s)} \bigg[ r(s, a) +  \sum_{s'} \frac{T(s' |s, a) e^{\beta \gamma V^{\pi}(s')}}{\sum_{\tilde s} T(\tilde s |s, a) e^{\beta \gamma V^{\pi}(\tilde s)}} \bigg[  \gamma V^{\pi}(s')  - 
    \frac{1}{\beta} \log \frac{\frac{T(s' |s, a) e^{\beta \gamma V^{\pi}(s')}}{\sum_{\tilde s} T(\tilde s |s, a) e^{\beta \gamma V^{\pi}(\tilde s)}}}{ T(s' |s, a)} \bigg]\bigg] \\
    &\stackrel{(b)}{=} \mathbb E_{a \sim \pi(\cdot | s)} \bigg[ r(s, a) +  \sum_{s'} \frac{T(s' |s, a) e^{\beta \gamma V^{\pi}(s')}}{\sum_{\tilde s} T(\tilde s |s, a) e^{\beta \gamma V^{\pi}(\tilde s)}} \bigg[  \gamma V^{\pi}(s')  - 
    \frac{1}{\beta} \log \frac{ e^{\beta \gamma V^{\pi}(s')}}{\sum_{\tilde s} T(\tilde s |s, a) e^{\beta \gamma V^{\pi}(\tilde s)}} \bigg]\bigg] \\
    &\stackrel{(c)}{=} \mathbb E_{a \sim \pi(\cdot | s)} \bigg[ r(s, a) +  \sum_{s'} \frac{T(s' |s, a) e^{\beta \gamma V^{\pi}(s')}}{\sum_{\tilde s} T(\tilde s |s, a) e^{\beta \gamma V^{\pi}(\tilde s)}} \bigg[  \gamma V^{\pi}(s')  - 
    \frac{1}{\beta} \left( \beta \gamma V^{\pi}(s') - \log \sum_{\tilde s} T(\tilde s |s, a) e^{\beta \gamma V^{\pi}(\tilde s)} \right) \bigg]\bigg] \\
    &\stackrel{(d)}{=} \mathbb E_{a \sim \pi(\cdot | s)} \bigg[ r(s, a) + \frac{1}{\beta}  \log \sum_{\tilde s} T(\tilde s |s, a) e^{\beta \gamma V^{\pi}(\tilde s)} + \sum_{s'} \frac{T(s' |s, a) e^{\beta \gamma V^{\pi}(s')}}{\sum_{\tilde s} T(\tilde s |s, a) e^{\beta \gamma V^{\pi}(\tilde s)}} \bigg[  \gamma V^{\pi}(s')  - 
    \frac{1}{\beta}  \beta \gamma V^{\pi}(s')\bigg]\bigg] \\
     &\stackrel{(e)}{=} \mathbb E_{a \sim \pi(\cdot | s)} \bigg[ r(s, a) + \frac{1}{\beta}  \log \sum_{\tilde s} T(\tilde s |s, a) e^{\beta \gamma V^{\pi}(\tilde s)} \bigg]
\end{align*}
(a) is plugging in $\psi^*$. (b) and (c) are just algebra. (d) comes from the term $\frac{1}{\beta}  \log \sum_{\tilde s} T(\tilde s |s, a) e^{\beta \gamma V^{\pi}(\tilde s)}$ not depending on $s'$. (e) comes from $\gamma V^{\pi}(s')  - 
    \frac{1}{\beta}  \beta \gamma V^{\pi}(s')=0$.

\end{proof}

\begin{definition}[Q-values] Using Proposition~\ref{prop:optimal_value} we can define the modified Q-values as
\begin{equation}
    \mathcal Q^{\pi} (s, a):=  r(s, a) + \frac{1}{\beta} \log \sum_{s'} T(s' | s, a) \exp \left(\gamma\beta \mathcal V^{\pi}(s') \right) 
\end{equation}
\end{definition}

Importantly, it can clearly be seen that the second term in the Q-values is a free energy or moment-generating function. As such they capture all the higher-order moments of the return and thus, agents acting with these Q-values are risk-sensitive where their risk-sensitivity is controlled by $\beta$. Positive $\beta$ generates risk-seeking behavior whereas for negative $\beta$ we obtain risk-averse behavior. The limit of $\lim_{\beta\rightarrow0}$ recovers the standard risk-neutral valuation which corresponds to the expectation. For a particular example see~\cite{fei2020risksensitive}.

A caveat is that our agents don't exactly learn these Q-values as we show next in a simplified one-time step case. Consider single step case biased dynamics: 
\begin{equation}\label{eq:biased_variational_observation_model}
   q^*_{\beta}(o|s, a)=\frac{1}{Z(s, a)}p(o|s, a) e^{\beta u(s, a, o)}
\end{equation}
where $Z(s, a)=\sum_o p(o|s, a) e^{\beta u(s, a, o)}$. Its associated free energy is $Q_F (s, a)= \frac{1}{\beta} \log \sum_o p(o|s, a) e^{\beta u(s, a, o)}$.  Then, under our training scheme, the agent computes the following value
\begin{equation}
   Q_{q^*_\beta} (s, a):= \sum_o q^*_\beta (o|s, a) u(s, a, o).
\end{equation}
The crucial point lies in the fact that the functions  $\beta \mapsto Q_F $ and $\beta \mapsto Q_{q^*_\beta}$ are very similar. That is  $Q_{q^*_\beta} = \frac{\partial}{\partial \beta} \log Z= \frac{\partial }{\partial \beta} \beta Q_F = Q_F + \beta \frac{\partial}{\partial \beta } Q_F$. So $Q_{q^*_\beta}$ is equal to  $Q_F$ plus an added term.  The main point here is that since it depends on  $Q_F$, and $Q_F$ is a risk-sensitive valuation (as shown for the multi-step case) the valuation $Q_{q^*_\beta}$ is also risk-sensitive. Additionally, one can easily conclude that both functions are identical at $\beta \in \{ \infty, -\infty, 0\}$. We can (mostly) conclude that $Q_F \approx Q_q$.

\section{Training Protocol Details for Risk-sensitivity}\label{appendix:protocol_details_risk}
Since the state-space can be continuous, $\rho$ can be difficult to compute. However, we do not need to compute $\rho$ exactly, we just need to sample states $x' \sim \rho$. These can be obtained by the following approximate procedure also depicted in Figure~\ref{fig:risk_one_step}C. First, we sample $N$ times from the natural distribution over environments $\tilde B = \{\tilde x_i\}^N_{i=1} \sim T(\cdot |x, a)$. Then, we use these samples to build  a proxy distribution $p_c$ which serves as an approximation to $\rho$. Lastly, we can easily sample the next state from $p_c$.

The proxy distribution is described in more detail now. Let $C$ denote the set of unique elements in $\tilde B$. Now, let $n_j:= \sum_i \mathbb{I} _{\tilde x_i = x_j}$ to be the number times the state $x_j \in C$ has been sampled, where $j$ is just a dummy index for the elements in the set $C$.  For uncountable $\mathcal X$, all samples are unique i.e., $n_j = 1$ for all $j$, however, for countable $\mathcal X$, $n_j$ can be greater than $1$. Note also that, $N=\sum_j n_j$. Next, we construct the vector $\boldsymbol c$ where $c_i := e^{\beta V_\omega (x'_i)}$. By normalizing $\boldsymbol c$ we obtain a probability distribution over $x'$ which we can sample from:
\begin{equation}\label{eq:dynamics_approx}
p_{\boldsymbol c} (x' |x, a):= \frac{n_j e^{\beta V_\theta (x_j)}}{\sum_k n_k e^{\beta V_\theta (x_k)}}  \quad \text{with} \quad x_j = x'
\end{equation}
since the state $x_j$ has been sampled $n_j$ times. Note that for $\beta = 0$ we recover the original dynamics i.e. $p_{\boldsymbol{c}} (x' | x, a) = T(x' | x, a)$. For $\beta \neq 0$ and $N\rightarrow \infty$ we have the following proposition.
\begin{proposition}[Convergence]\label{prop:convergence_to_rho}
We can recover $\rho$ in the limit of infinite samples. That is $\rho(x' | x,a) = \lim_{N \rightarrow \infty} p_{\boldsymbol c} (x')$. 
\end{proposition}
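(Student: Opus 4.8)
The plan is to recognize that $p_{\boldsymbol c}$ is exactly the exponential tilt of the \emph{empirical} distribution of the $N$ proposal samples, and then to invoke the law of large numbers to replace that empirical distribution by the true transition kernel $T(\cdot\mid x,a)$ in the limit. First I would rewrite Equation~\eqref{eq:dynamics_approx} in terms of the empirical measure $\hat T_N := \frac1N\sum_{i=1}^N \delta_{\tilde x_i}$. Since $n_j = N\,\hat T_N(\{x_j\})$, dividing numerator and denominator by $N$ gives
\[
 p_{\boldsymbol c}(x'\mid x,a) \;=\; \frac{\hat T_N(\{x'\})\, e^{\beta V_\omega(x')}}{\sum_k \hat T_N(\{x_k\})\, e^{\beta V_\omega(x_k)}} \;=\; \frac{\hat T_N(\{x'\})\, e^{\beta V_\omega(x')}}{\int e^{\beta V_\omega(y)}\,\hat T_N(dy)}.
\]
In the countable case this is literally a probability mass function on $\mathcal X$; in the uncountable case it is the mass function of the discrete tilted measure $q_N(dy)\propto e^{\beta V_\omega(y)}\,\hat T_N(dy)$, and ``$\rho = \lim_N p_{\boldsymbol c}$'' should be read as weak convergence $q_N\Rightarrow\rho(\cdot\mid x,a)$.

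Second, I would control the denominator. By the strong law of large numbers applied to the i.i.d.\ samples $\tilde x_i\sim T(\cdot\mid x,a)$, for any bounded measurable $g$ we have $\int g\,d\hat T_N=\frac1N\sum_i g(\tilde x_i)\to \mathbb{E}_{T(\cdot\mid x,a)}[g]$ almost surely; taking $g=e^{\beta V_\omega}$ (bounded, since the values of a discounted MDP with bounded rewards are bounded; otherwise assume $e^{\beta V_\omega}\in L^1(T)$ together with a uniform-integrability condition) yields
\[
 Z_N := \int e^{\beta V_\omega(y)}\,\hat T_N(dy)\;\xrightarrow[N\to\infty]{\text{a.s.}}\; \sum_{\tilde x} T(\tilde x\mid x,a)\,e^{\beta V_\omega(\tilde x)} \;=:\; Z,
\]
with $Z>0$ because the integrand is strictly positive, so for large $N$ the ratio is well defined. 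In the countable case I would then fix $x'$ with $T(x'\mid x,a)>0$ and use $\hat T_N(\{x'\})=\frac1N\sum_i\mathbb{I}_{\tilde x_i=x'}\to T(x'\mid x,a)$ a.s.; combining with $Z_N\to Z$ gives $p_{\boldsymbol c}(x'\mid x,a)\to T(x'\mid x,a)\,e^{\beta V_\omega(x')}/Z=\rho(x'\mid x,a)$, which is the claim, while for $x'$ with $T(x'\mid x,a)=0$ both sides vanish.

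Third, for the general (uncountable) case I would establish weak convergence directly: for bounded continuous $h$,
\[
 \int h\,dq_N \;=\; \frac{\int h(y)e^{\beta V_\omega(y)}\,\hat T_N(dy)}{\int e^{\beta V_\omega(y)}\,\hat T_N(dy)} \;\xrightarrow[N\to\infty]{}\; \frac{\mathbb{E}_{T(\cdot\mid x,a)}[h\,e^{\beta V_\omega}]}{\mathbb{E}_{T(\cdot\mid x,a)}[e^{\beta V_\omega}]} \;=\; \int h\,d\rho(\cdot\mid x,a),
\]
using the SLLN for the numerator and the second step for the denominator, and then the portmanteau theorem to conclude $q_N\Rightarrow\rho(\cdot\mid x,a)$. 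The main obstacle is not the probabilistic core — which is just the law of large numbers — but the integrability/boundedness bookkeeping needed to pass $e^{\beta V_\omega}$ through the limit when $\mathcal X$ is unbounded and $\beta$ has the ``wrong'' sign, plus the minor care required to interpret $p_{\boldsymbol c}$ as a measure rather than a density so that the limiting statement is well posed in the continuous setting. In the paper's finite-marble experiments $\mathcal X$ is finite, so these issues evaporate and only the clean SLLN argument of the second step remains.
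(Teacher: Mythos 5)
Your proof is correct and follows essentially the same route as the paper's: divide numerator and denominator of $p_{\boldsymbol c}$ by $N$ and apply the law of large numbers to the empirical frequencies $n_j/N \to T(x'\mid x,a)$ and to the normalizer. The additional care you take with the uncountable case (reading the limit as weak convergence of the tilted empirical measure) and with integrability of $e^{\beta V_\omega}$ goes beyond the paper's brief argument but does not change the underlying idea.
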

\begin{proof}
Multiply and divide Equation~\eqref{eq:dynamics_approx} by $N$. 
\begin{equation}
 \frac{n_j/N e^{\beta V_\theta (s_j)}}{\sum_j n_j/N e^{\beta V_\theta (s_j)}} 
\end{equation}
Note that the the limit $\lim_{N\rightarrow \infty} n_j/N = T(s'| s, a)$. Therefore, the limit  of the numerator is 
\begin{equation*}
  \lim_{N\rightarrow \infty} n_j/N e^{\beta V_\theta (s')}= T(s' |s, a) e^{\beta V_\theta (s')},
\end{equation*}
and similarly for the denominator.
\end{proof}
 
Now we can easily sample the next state from $x \sim p_c$ with the guarantee that for large $N$ we are approximating $\rho$ due to Proposition \ref{prop:convergence_to_rho}. In our experiments we use $N=10$ and the agent computes the Q-values with $\mathcal Q^{\pi p_{\boldsymbol c}} $ and the policy is $\pi(x) = \arg \max_a \mathcal Q^{\pi p_{\boldsymbol c}}(x, a) $.

\section{Hyper-parameters}\label{appendix:hyperparameters}

The following Table~\ref{tab:hyperparameters} describes the hyper-parameters that we used for training our agents. 

\begin{table}[h]
\begin{center}
\begin{tabular}{>{\raggedleft\arraybackslash}p{4.5cm}p{3.0cm}p{2.0cm}} 
\textbf{Hyper-parameter} & \textbf{Value} \\ \toprule
Discount factor & 0.95   \\ \addlinespace[1.5pt]
Batch size & 128         \\ \addlinespace[1.5pt]
Max learner steps & 1M   \\ \addlinespace[1.5pt]
Replay capacity & 1e5    \\ \addlinespace[1.5pt]
$K$ Q-networks & 20      \\ \addlinespace[1.5pt]
RNN torso, LSTM and head widths & (128, 128, 128)  \\ \addlinespace[1.5pt]
Learning rate & 1e-4     \\ \addlinespace[1.5pt]
Max grad norm & 1.       \\  \addlinespace[1.5pt]
Replay period & 40       \\ \addlinespace[1.5pt]
Min replay size & 500    \\ \addlinespace[1.5pt]
Burn in length & 0       \\ \addlinespace[1.5pt]
\bottomrule 
\end{tabular}
\end{center}
\caption{Hyper-parameters used for training. See~\cite{kapturowski2018recurrent} for more details}
\label{tab:hyperparameters}
\end{table}

\end{document}